\newtheorem{assumption}{Assumption}
\newtheorem{proposition}{Proposition}
\newtheorem{lemma}{Lemma}
\newtheorem{theorem}{Theorem}
\newcommand{\X}{\mathcal{X}}
\title{A Lyapunov-based Approach to Safe Reinforcement Learning}
\author{
    Yinlam Chow \\
  Google DeepMind \\
  \texttt{yinlamchow@google.com} \\
  \AND
  Ofir Nachum \\ 
  Google Brain \\
    \texttt{ofirnachum@google.com} \\
  \AND
    Edgar Duenez-Guzman \\
  Google DeepMind  \\ 
    \texttt{duenez@google.com} \\
   \AND
  Mohammad Ghavamzadeh \\
  Google DeepMind \\
    \texttt{ghavamza@google.com} \\
}
\begin{document}

\maketitle

\vspace{-0.3in}
\begin{abstract}
\vspace{-0.1in}
In many real-world reinforcement learning (RL) problems, besides optimizing the main objective function, an agent must concurrently avoid violating a number of constraints.
In particular, besides optimizing performance it is crucial to guarantee the \emph{safety} of an agent during training as well as deployment (e.g. a robot should avoid taking actions - exploratory or not - which irrevocably harm its hardware).
To incorporate safety in RL, we derive algorithms under the framework of constrained Markov decision problems (CMDPs), an extension of the standard Markov decision problems (MDPs) augmented with constraints on expected cumulative costs.
Our approach hinges on a novel \emph{Lyapunov} method.  
We define and present a method for constructing Lyapunov functions, which provide an effective way to guarantee the global safety of a behavior policy during training via a set of local, linear constraints.
Leveraging these theoretical underpinnings, we show how to use the Lyapunov approach to systematically transform dynamic programming (DP) and RL algorithms into their safe counterparts. 
To illustrate their effectiveness, we evaluate these algorithms in several CMDP planning and decision-making tasks on a safety benchmark domain. Our results show that our proposed method significantly outperforms existing baselines in balancing constraint satisfaction and performance.
\end{abstract}

\vspace{-0.2in}
\section{Introduction}
\vspace{-0.1in}

Reinforcement learning (RL) has shown exceptional successes in a variety of domains such as video games \citep{mnih2013playing} and recommender systems \citep{shani2005mdp}, where the main goal is to optimize a single return. However, in many real-world problems, besides optimizing the main objective (the return), there can exist several conflicting constraints that make RL challenging. In particular,  besides optimizing performance it is crucial to guarantee the \emph{safety} of an agent in deployment \citep{amodei2016concrete} as well as during training \citep{achiam2017constrained}.
For example, a robot agent should avoid taking actions which irrevocably harm its hardware; a recommender system must avoid presenting  harmful or offending items to users.

Sequential decision-making in non-deterministic
environments has been extensively studied in the literature under the framework of Markov decision problems (MDPs). 
To incorporate safety into the RL process, we are particularly interested in deriving algorithms under the context of constrained
Markov decision problems (CMDPs), which is an extension of MDPs with expected cumulative constraint costs. The additional constraint component of CMDPs increases flexibility in modeling problems with \emph{trajectory-based} constraints, when compared with other approaches that customize immediate costs in MDPs to handle constraints \citep{regan2009regret}. As shown in numerous applications from robot motion planning \citep{ono2015chance, moldovan2012safe, chow2015trading}, resource allocation \citep{mastronarde2011fast, junges2016safety}, and financial engineering \citep{abe2010optimizing, tamar2012policy}, it is more natural to define safety over the whole trajectory, instead of over particular state and action pairs. Under this framework, we denote an agent's behavior policy to be \emph{safe} if it satisfies the cumulative cost constraints of the CMDP.

Despite the capabilities of CMDPs, they have not been very popular in RL. One main reason is that, although optimal policies of finite CMDPs are Markov and stationary, and with known models the CMDP can be solved using linear programming (LP) \citep{altman1999constrained}, it is unclear how to extend this algorithm to handle cases when the model is unknown, or when the state and action spaces are large or continuous. A well-known approach to solve CMDPs is the Lagrangian method \citep{altman1998constrained, geibel2005risk}, which augments the standard expected reward objective with a penalty on constraint violation. 
With a fixed Lagrange multiplier, one can use standard dynamic programming (DP) or RL algorithms to solve for an optimal policy. 
With a learnable Lagrange multiplier, one must solve the resulting saddle point problem. 
However, several studies \citep{lee2017first} showed that iteratively solving the saddle point is apt to run into numerical stability issues. More importantly, the Lagrangian policy is only safe \emph{asymptotically} and makes little guarantee with regards to safety of the behavior policy during each training iteration.

Motivated by these observations, several recent works have derived surrogate algorithms for solving CMDPs, which transform the original constraint to a more conservative one that yields an easier problem to solve.
A straight-forward approach is to replace the cumulative constraint cost with a conservative \emph{stepwise} surrogate constraint \citep{el2016convex} that only depends on current state and action pair. Since this surrogate constraint can be easily embedded into the admissible control set, this formulation can be modeled by an MDP that has a restricted set of admissible actions. Another surrogate algorithm was proposed by \cite{gabor1998multi}, in which the algorithm first computes a uniform \emph{super-martingale} constraint value function surrogate w.r.t. all policies. 
Then one computes a CMDP feasible policy by optimizing the surrogate problem using the \emph{lexicographical ordering} method \citep{schmitt2006complexity}.
These methods are advantageous in the sense that (i) there are RL algorithms available to handle the surrogate problems (for example see \cite{dalal2018safe} for the step-wise surrogate, and see \cite{mossalam2016multi} for the super-martingale surrogate), (ii) the policy returned by this method is safe, even during training. 
However, the main drawback of these approaches is their conservativeness. Characterizing sub-optimality performance of the corresponding solution policy also remains a challenging task. 
On the other hand, recently in policy gradient, \cite{achiam2017constrained} proposed the \emph{constrained policy optimization} (CPO) method that extends trust-region policy optimization (TRPO) to handle the CMDP constraints. While this algorithm is scalable, and its policy is safe during training, analyzing its convergence is challenging, and applying this methodology to other RL algorithms is non-trivial.   

 Lyapunov functions have been extensively used in control theory to analyze the stability of dynamic systems~\citep{khalil1996noninear, neely2010stochastic}.
A Lyapunov function is a type of scalar potential function that keeps track of the \emph{energy} that a system continually dissipates. Besides modeling physical energy, Lyapunov functions can also represent abstract quantities, such as the steady-state performance of a Markov process \citep{glynn2008bounding}. In many fields, Lyapunov functions provide a powerful paradigm to translate global properties of a system to local ones, and vice-versa.
Using Lyapunov functions in RL was first studied by \cite{perkins2002lyapunov}, where Lyapunov functions were used to guarantee closed-loop stability of an agent.
Recently \cite{berkenkamp2017safe} used Lyapunov functions to guarantee a model-based RL agent's ability to re-enter an ``attraction region'' during exploration. However no previous works have used Lyapunov approaches to explicitly model constraints in a CMDP.  
Furthermore, one major drawback of these approaches is that the Lyapunov functions are hand-crafted, and there are no principled guidelines on designing Lyapunov functions that can guarantee an agent's performance.

The contribution of this paper is four-fold. First we formulate the problem of safe reinforcement learning as a CMDP and propose a novel \emph{Lyapunov approach} for solving them. 
While the main challenge of other Lyapunov-based methods is to design a Lyapunov function candidate, we propose an LP-based algorithm to construct Lyapunov functions w.r.t. generic CMDP constraints. We also show that our method is guaranteed to always return a feasible policy and, under certain technical assumptions, it achieves optimality. 
Second, leveraging the theoretical underpinnings of the Lyapunov approach, we present two safe DP algorithms -- safe policy iteration (SPI) and safe value iteration (SVI) -- and analyze the feasibility and performance of these algorithms. 
Third, to handle unknown environment models and large state/action spaces, we develop two scalable, safe RL algorithms -- (i) \emph{safe DQN}, an off-policy fitted $Q-$iteration method, and (ii) \emph{safe DPI}, an approximate policy iteration method.
Fourth, to illustrate the effectiveness of these algorithms, we evaluate them in several tasks on a benchmark 2D planning problem, and show that they outperform common baselines in terms of balancing performance and constraint satisfaction.


\vspace{-0.1in}
\section{Preliminaries}\label{sec:prelim}
\vspace{-0.1in}

We consider the RL problem in which the agent's interaction with the system is modeled as a Markov decision process (MDP). 
A MDP is a tuple $(\mathcal X,\mathcal A,c,P,x_0)$, where $\mathcal X=\mathcal X'\cup\{x_{\text{Term}}\}$ is the state space, with transient state space $\mathcal X'$ and terminal state $x_{\text{Term}}$; $\mathcal A$ is the action space; $c(x,a)\in[0,C_{\max}]$ is the immediate cost function (negative reward);
$P(\cdot|x,a)$ is the transition probability distribution; and $x_0\in \mathcal X'$ is the initial state. 
Our results easily generalize to random initial states and random costs, but for simplicity we will focus on the case of deterministic initial state and immediate cost. 
In a more general setting where cumulative constraints are taken into account, we define a constrained Markov decision process (CMDP), which extends the MDP 
model by introducing additional costs and associated constraints. A CMDP is defined by  $(\mathcal X,\mathcal A,c,d,P,x_0,d_0)$,
where the components $\mathcal X,\mathcal A,c,P,x_0$ are the same for the unconstrained MDP; $d(x)\in[0,D_{\max}]$ is the immediate constraint cost; and $d_0 \in \mathbb{R}_{\geq 0}$ is an upper bound for the expected cumulative (through time) constraint cost.
To formalize the optimization problem associated with CMDPs, let $\Delta$ be the set of Markov stationary policies, i.e., $\Delta(x)=\{\pi(\cdot|x):\mathcal X\rightarrow \mathbb R_{\geq 0s}:\sum_{a}\pi(a|x)=1\}$ for any state $x\in\mathcal X$.
Also let $\mathrm{T}^*$ be a random variable corresponding to the first-hitting time of the terminal state $x_{\text{Term}}$ induced by policy $\pi$.  In this paper, we follow the standard notion of transient MDPs and assume that the first-hitting time is uniformly bounded by an upper bound $\overline{\mathrm{T}}$.
This assumption can be justified by the fact that sample trajectories collected in most RL algorithms consist of a finite
stopping time (also known as a time-out); the assumption may also be relaxed in cases where a discount $\gamma<1$ is applied on future costs.
For notational convenience, at each state $x\in\mathcal X'$, we define the generic Bellman operator w.r.t. policy $\pi\in\Delta$ and generic cost function $h$:
$
T_{\pi,h}[V](x)=\sum_{a}\pi(a|x)\!\left[h(x,a)\!+\!\sum_{x'\in\mathcal X'}\!\!P(x'|x,a)V(x')\right].
$

Given a policy $\pi\in \Delta$, an initial state $x_0$, the cost function is defined as
$
\mathcal C_\pi(x_0):=\mathbb E\big[\sum_{t=0}^{\mathrm{T}^*-1}c(x_t, a_t)\mid x_0,\pi\big],
$
and the safety constraint is defined as
$
\mathcal D_\pi(x_0)\leq d_0
$
where the safety constraint function is given by
$
\mathcal D_\pi(x_0):=\mathbb E\big[\sum_{t=0}^{\mathrm{T}^*-1}d(x_t)\mid x_0,\pi\big].
$
In general the CMDP problem we wish to solve is given as follows:
\begin{quote} {\bf Problem $\mathcal{OPT}$:} Given an initial state $x_0$ and a threshold $d_0$, solve $\min_{\pi \in \Delta} \,\left\{\mathcal C_\pi(x_0): \mathcal D_\pi(x_0)\leq d_0\right\}.$
If there is a non-empty solution, the optimal policy is denoted by $\pi^*$.
\end{quote}
Under the transient CMDP assumption, Theorem 8.1 in \cite{altman1999constrained}, shows that if the feasibility set is non-empty, then there exists an optimal policy in the class of stationary Markovian policies $\Delta$. To motivate the CMDP formulation studied in this paper, in Appendix \ref{sec:safety_problem_examples}, we include two real-life examples in modeling safety using (i) the reachability constraint, and (ii) the constraint that limits the agent's visits to undesirable states. Recently there has been a number of works on CMDP  algorithms; their details can be found in Appendix \ref{sec:existing approaches}. 

\vspace{-0.1in}
\section{A Lyapunov Approach for Solving CMDPs}\label{sec:lyapunov_cmdp}
\vspace{-0.1in}


In this section we develop a novel methodology for solving CMDPs using the \emph{Lyapunov approach}.
To start with, without loss of generality assume we have access to a \emph{baseline} feasible policy of problem $\mathcal{OPT}$, namely $\pi_B\in\Delta$\footnote{One example of $\pi_B$ is a policy that minimizes the constraint, i.e., $\pi_B(\cdot|x)\in\arg\min_{\pi\in\Delta(x)}\mathcal D_{\pi}(x)$.}. We define a non-empty\footnote{To see this, the constraint cost function $\mathcal D_{\pi_{B}}(x)$, is a valid Lyapunov function, i.e., $\mathcal D_{\pi_{B}}(x_0)\leq d_0$, $\mathcal D_{\pi_{B}}(x)=0$, for $x\in\mathcal X\setminus\mathcal X'$, and 
$
\mathcal D_{\pi_{B}}(x)=T_{\pi_{B},d}[\mathcal D_{\pi_{B}}](x)=\mathbb E\left[\sum_{t=0}^{\mathrm{T}^*-1}d(x_t)\mid \pi_{B},x\right],\,\,\forall x\in\mathcal X'.
$}
 set of {Lyapunov functions} w.r.t. initial state $x_0\in\mathcal X$ and constraint threshold $d_0$ as
\begin{equation}\label{eq:Lyap_fn}
\mathcal L_{\pi_B}(x_0,d_0)\!=\!\bigg\{\!L\!:\mathcal X\!\rightarrow\!\mathbb R_{\geq 0}\!: \!T_{\pi_{B},d}[L](x)\!\leq\! L(x),\forall x\in\mathcal X';\, L(x)=0,\,\forall x\in\mathcal X\setminus\mathcal X';\,L(x_0)\leq d_0 \bigg\}.
\end{equation}
For any arbitrary Lyapunov function $L\in\mathcal L_{\pi_{B}}(x_0,d_0)$, denote by $\mathcal F_{L}(x)=\left\{\pi(\cdot|x)\in\Delta:T_{\pi,d}[L](x)\leq\! L(x)\right\}$ the set of $L-$induced Markov stationary policies. 
Since $T_{\pi,d}$ is a contraction mapping \citep{bertsekas1995dynamic}, clearly any $L-$induced policy $\pi$ has the following property: $\mathcal D_{\pi}(x)=\lim_{k\rightarrow\infty}T^k_{\pi,d}[L](x)\leq L(x)$, $\forall x\in\mathcal X'$. 
Together with the property of $ L(x_0)\leq d_0$, this further implies any $L-$induced policy is a feasible policy of problem $\mathcal{OPT}$. 
However in general the set $\mathcal F_{L}(x)$ does not necessarily contain any optimal policies of problem $\mathcal{OPT}$, and our main contribution is to design a Lyapunov function (w.r.t. a baseline policy) that provides this guarantee. 
In other words, our main goal is to construct a Lyapunov function $L\in \mathcal L_{\pi_B}(x_0,d_0)$ such that
\begin{equation}
L(x)\geq T_{\pi^*,d}[L](x),\,\,L(x_0)\leq d_0.\label{eq:opt_lyap}
\end{equation}

Before getting into the main results, we consider the following important technical lemma, which states that 
 with appropriate \emph{cost-shaping}, one can always transform the constraint value function $\mathcal D_{\pi^*}(x)$ w.r.t. optimal policy $\pi^*$ into a Lyapunov function that is induced by $\pi_B$, i.e., $L_\epsilon(x)\in \mathcal L_{\pi_B}(x_0,d_0)$. The proof of this lemma can be found in Appendix \ref{appendix:lem:existence_L}.
\begin{lemma}\label{lem:existence_L}
There exists an auxiliary constraint cost $\epsilon:\mathcal X'\rightarrow\mathbb R$ such that the Lyapunov function is given by $L_{\epsilon}(x)=\mathbb E\left[\sum_{t=0}^{\mathrm{T}^*-1}d(x_t)+\epsilon(x_t)\mid \pi_{B},x\right],\forall x\in\mathcal X',
$
and $L_{\epsilon}(x)=0$ for $x\in\mathcal X\setminus\mathcal X'$.
Moreover, $L_{\epsilon}$ is equal to the constraint value function w.r.t. $\pi^*$, i.e., $L_{\epsilon}(x)=\mathcal D_{\pi^*}(x)$. 
\end{lemma}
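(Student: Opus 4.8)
The plan is to exhibit the required $\epsilon$ by \emph{cost shaping}: I will choose $\epsilon$ so that $\mathcal D_{\pi^*}$ becomes the value function of the baseline policy $\pi_B$ under the modified per-step cost $d+\epsilon$, and then verify the defining properties of $\mathcal L_{\pi_B}(x_0,d_0)$ one by one. The starting point is that $\mathcal D_{\pi^*}$ is the unique fixed point of its own Bellman operator, $\mathcal D_{\pi^*}(x)=T_{\pi^*,d}[\mathcal D_{\pi^*}](x)$ on $\mathcal X'$ with $\mathcal D_{\pi^*}=0$ on $\mathcal X\setminus\mathcal X'$; uniqueness comes from the transient (bounded first-hitting-time) assumption, under which every $T_{\pi,d}$ is a contraction.

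Since $\epsilon$ is a function of state only, the shaped operator splits as $T_{\pi_B,d+\epsilon}[V](x)=T_{\pi_B,d}[V](x)+\epsilon(x)$. Demanding that $\mathcal D_{\pi^*}$ be a fixed point of $T_{\pi_B,d+\epsilon}$ therefore forces the unique choice
\[
\epsilon(x)=\mathcal D_{\pi^*}(x)-T_{\pi_B,d}[\mathcal D_{\pi^*}](x)=\big(T_{\pi^*,d}-T_{\pi_B,d}\big)[\mathcal D_{\pi^*}](x),\qquad x\in\mathcal X'.
\]
With this $\epsilon$, $\mathcal D_{\pi^*}$ solves $V=T_{\pi_B,d+\epsilon}[V]$; because this operator is again a contraction, its fixed point is unique and coincides with $L_\epsilon(x)=\mathbb E\big[\sum_{t=0}^{\mathrm T^*-1}(d(x_t)+\epsilon(x_t))\mid\pi_B,x\big]$. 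This gives the ``moreover'' claim $L_\epsilon=\mathcal D_{\pi^*}$, and the boundary and threshold conditions then follow for free: $L_\epsilon=0$ on $\mathcal X\setminus\mathcal X'$, $L_\epsilon\ge 0$ since $d\ge 0$, and $L_\epsilon(x_0)=\mathcal D_{\pi^*}(x_0)\le d_0$ by feasibility of $\pi^*$.

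The main obstacle is the Lyapunov decrease condition $T_{\pi_B,d}[L_\epsilon](x)\le L_\epsilon(x)$ required for $L_\epsilon\in\mathcal L_{\pi_B}(x_0,d_0)$. Using $L_\epsilon=\mathcal D_{\pi^*}$ together with the fixed-point relation gives $T_{\pi_B,d}[L_\epsilon](x)=L_\epsilon(x)-\epsilon(x)$, so the condition is \emph{exactly} equivalent to $\epsilon(x)\ge 0$ for all $x\in\mathcal X'$, i.e. to $T_{\pi_B,d}[\mathcal D_{\pi^*}](x)\le T_{\pi^*,d}[\mathcal D_{\pi^*}](x)$. After the immediate cost $d(x)$ cancels, this amounts to showing that the expected one-step-ahead value $\sum_a\pi(a|x)\sum_{x'}P(x'|x,a)\mathcal D_{\pi^*}(x')$ is no larger under $\pi_B$ than under $\pi^*$. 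This sign requirement is the crux, and it is where the choice of baseline must enter: I expect to need the constraint-minimizing property of $\pi_B$ from the footnote, so that $\pi_B$ steers toward low-$\mathcal D_{\pi^*}$ successors, and I would try to argue $\epsilon\ge 0$ by comparing $\pi_B$'s one-step action against $\pi^*$'s on the value $\mathcal D_{\pi^*}$, restricting if necessary to states reachable under $\pi^*$ so that the off-path values of $\mathcal D_{\pi^*}$ are pinned down. Once this sign is secured, everything else is routine contraction and uniqueness bookkeeping.
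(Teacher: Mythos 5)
Your first two paragraphs are correct and constitute a complete proof of the lemma, by essentially the same route as the paper: your $\epsilon=(T_{\pi^*,d}-T_{\pi_B,d})[\mathcal D_{\pi^*}]$ is exactly the paper's cost-shaping term $\epsilon(x)=\sum_{a}\bigl(\pi^*(a|x)-\pi_B(a|x)\bigr)\sum_{x'}P(x'|x,a)\mathcal D_{\pi^*}(x')$ (the paper writes it with $\Delta=\pi_B-\pi^*$, a sign typo), and your fixed-point/contraction argument is the operator form of the paper's matrix identity $(I-P^*)^{-1}d=(I-P_B)^{-1}(d+\epsilon)$, which the paper derives via the Sherman--Morrison--Woodbury manipulation; your version is arguably cleaner. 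The paper additionally records the bound $|\epsilon(x)|\le 2\overline{\mathrm{T}}D_{\max}D_{TV}(\pi^*||\pi_B)(x)$, which is what the surrounding text later uses, but that is not part of the lemma's claim.

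Your third paragraph, however, chases a condition that the lemma does not assert and that you could not have proved: the decrease property $T_{\pi_B,d}[L_\epsilon]\le L_\epsilon$, equivalently $\epsilon\ge 0$, is generally false for this shaped cost, and the paper never claims it for $L_\epsilon$. Immediately after the lemma the paper states $\epsilon(x)\in[-\epsilon^*(x),\epsilon^*(x)]$, explicitly allowing negative values, and the paper's own proof establishes only the representation $L_\epsilon=\mathcal D_{\pi^*}$, the boundary condition, and $L_\epsilon(x_0)\le d_0$ from feasibility of $\pi^*$. Your hoped-for argument from the footnote's constraint-minimizing choice of $\pi_B$ cannot close this: minimality of $\mathcal D_{\pi_B}$ gives one-step dominance of $\pi_B$ on its own value $\mathcal D_{\pi_B}$, not on $\mathcal D_{\pi^*}$, and nothing forces $\pi_B$ to beat $\pi^*$ one step ahead on $\pi^*$'s constraint value. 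Restoring the decrease property is precisely the job the paper assigns to the separate candidate $L_{\epsilon^*}$, built from the nonnegative envelope $\epsilon^*(x)=2\overline{\mathrm{T}}D_{\max}D_{TV}(\pi^*||\pi_B)(x)$, and to Theorem 1 under Assumption 1. So you should delete the third paragraph: your proof is already complete at the end of the second, and treating $\epsilon\ge 0$ as a remaining obligation would turn a correct proof into an unfinishable one.
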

From the structure of $L_{\epsilon}$, one can see that the auxiliary constraint cost function $\epsilon$ is uniformly bounded by $\epsilon^*(x):=2 \overline{\mathrm{T}} D_{\max}D_{TV}(\pi^*||\pi_{B})(x)$,\footnote{The definition of total variation distance is given by $D_{TV}(\pi^*||\pi_{B})(x)=\frac{1}{2}\sum_{a\in\mathcal A}|\pi_B(a|x)-\pi^*(a|x)|$.} i.e., $\epsilon(x)\in[-\epsilon^*(x),\epsilon^*(x)]$, for any $x\in\mathcal X'$. However in general it is unclear how to construct such a cost-shaping term $\epsilon$ without explicitly knowing $\pi^*$ a-priori. 
Rather, inspired by this result, we consider the bound $\epsilon^*$ to propose a Lyapunov function \emph{candidate} $L_{\epsilon^*}$. Immediately from its definition, this function has the following properties:
\begin{align}
&L_{\epsilon^*}(x)\geq T_{\pi_{B},d}[L_{\epsilon^*}](x),\,\,L_{\epsilon^*}(x)\geq \max\left\{\mathcal D_{\pi^*}(x),\mathcal D_{\pi_{B}}(x)\right\}\geq 0,\,\forall x\in\mathcal X'.\label{eq:baseline_lyap}
\end{align}
The first property is due to the facts that: (i) $\epsilon^*$ is a non-negative cost function; (ii) $T_{\pi_{B},d+\epsilon^*}$ is a contraction mapping, which by the fixed point theorem \citep{bertsekas1995dynamic} implies
$
L_{\epsilon^*}(x)=T_{\pi_{B},d+\epsilon^*}[L_{\epsilon^*}](x)\geq T_{\pi_{B},d}[L_{\epsilon^*}](x),\,\forall x\in\mathcal X'.
$
For the second property, from the above inequality one concludes that the Lyapunov function $L_{\epsilon^*}$ is a uniform upper bound to the constraint cost, i.e.,
$L_{\epsilon^*}(x)\geq \mathcal D_{\pi_{B}}(x)$,
because the constraint cost $\mathcal D_{\pi_{B}}(x)$ w.r.t. policy $\pi_B$ is the unique solution to the fixed-point equation $T_{\pi_{B},d}[V](x)=V(x)$, $x\in\mathcal X'$. 
On the other hand, by construction $\epsilon^*(x)$ is an upper-bound of the cost-shaping term $\epsilon(x)$. Therefore Lemma \ref{lem:existence_L} implies that Lyapunov function $L_{\epsilon^*}$ is a uniform upper bound to the constraint cost w.r.t. optimal policy $\pi^*$, i.e.,
$L_{\epsilon^*}(x)\geq \mathcal D_{\pi^*}(x)$.

To show that $L_{\epsilon^*}$ is a Lyapunov function that satisfies \eqref{eq:opt_lyap}, 
we propose the following condition that enforces a baseline policy $\pi_B$ to be \emph{sufficiently close} to an optimal policy $\pi^*$.
\begin{assumption}\label{assumption:pi_b}
The feasible baseline policy $\pi_{B}$ satisfies the following condition:
$
\max_{x\in\mathcal X'}\epsilon^*(x)\!\leq\!D_{\max}\cdot\min\left\{\frac{d_0-\mathcal D_{\pi_{B}}(x_0)}{\overline{\mathrm{T}} D_{\max}},\frac{\overline{\mathrm{T}}D_{\max}-\overline{\mathcal D}}{\overline{\mathrm{T}}D_{\max}+\overline{\mathcal D}}\right\},
$
where $\overline{\mathcal D}=\max_{x\in\mathcal X'}\max_{\pi} \mathcal D_\pi(x)$.
\end{assumption}

This condition characterizes the maximum allowable distance between $\pi_B$ and $\pi^*$, such that the set of $L_{\epsilon^*}-$induced policies contains an optimal policy. To formalize this claim, we have the following main result showing that $L_{\epsilon^*}\in\mathcal L_{\pi_{B}}(x_0,d_0)$, and the set of policies $\mathcal F_{L_{\epsilon^*}}$ contains an optimal policy.
\begin{theorem}\label{thm:L_star}
Suppose the baseline policy $\pi_B$ satisfies Assumption $1$, then on top of the properties in \eqref{eq:baseline_lyap}, the Lyapunov function candidate $L_{\epsilon^*}$ also satisfies the properties in \eqref{eq:opt_lyap}, and therefore its induced feasible set of policies $\mathcal F_{L_{\epsilon^*}}$ contains an optimal policy.
\end{theorem}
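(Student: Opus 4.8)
The plan is to verify the two requirements in \eqref{eq:opt_lyap} separately, since the two arguments of the minimum in Assumption~\ref{assumption:pi_b} are tailored to them. First I would handle the threshold condition $L_{\epsilon^*}(x_0)\leq d_0$. Writing $L_{\epsilon^*}(x)=\mathcal D_{\pi_B}(x)+\mathbb{E}\big[\sum_{t=0}^{\mathrm{T}^*-1}\epsilon^*(x_t)\mid\pi_B,x\big]$ and using that the first-hitting time is bounded by $\overline{\mathrm{T}}$, the cost-shaping remainder is at most $\overline{\mathrm{T}}\max_{x}\epsilon^*(x)$, so $L_{\epsilon^*}(x_0)\leq\mathcal D_{\pi_B}(x_0)+\overline{\mathrm{T}}\max_{x}\epsilon^*(x)$. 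The first argument of the minimum forces $\overline{\mathrm{T}}\max_x\epsilon^*(x)\leq d_0-\mathcal D_{\pi_B}(x_0)$, which yields $L_{\epsilon^*}(x_0)\leq d_0$ at once.

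Next I would establish $L_{\epsilon^*}(x)\geq T_{\pi^*,d}[L_{\epsilon^*}](x)$, i.e.\ that $\pi^*$ is $L_{\epsilon^*}$-induced. The key is the decomposition $L_{\epsilon^*}(x)-T_{\pi^*,d}[L_{\epsilon^*}](x)=\big(L_{\epsilon^*}(x)-T_{\pi_B,d}[L_{\epsilon^*}](x)\big)+\big(T_{\pi_B,d}[L_{\epsilon^*}](x)-T_{\pi^*,d}[L_{\epsilon^*}](x)\big)$. Since $L_{\epsilon^*}$ is the fixed point of $T_{\pi_B,d+\epsilon^*}$, the first bracket equals $\epsilon^*(x)$; because $d(x)$ is action-independent and both policies are normalized, the second bracket equals $\sum_a\big(\pi_B(a|x)-\pi^*(a|x)\big)\sum_{x'}P(x'|x,a)L_{\epsilon^*}(x')$. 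It therefore suffices to show $\epsilon^*(x)\geq\big|\sum_a(\pi_B(a|x)-\pi^*(a|x))\sum_{x'}P(x'|x,a)L_{\epsilon^*}(x')\big|$. I would bound the right-hand side by $2D_{TV}(\pi^*||\pi_B)(x)\cdot\max_{x'}L_{\epsilon^*}(x')$ via a total-variation (H\"older) inequality, and bound $\max_{x'}L_{\epsilon^*}(x')\leq\overline{\mathcal D}+\overline{\mathrm{T}}\max_y\epsilon^*(y)$. Substituting $\epsilon^*(x)=2\overline{\mathrm{T}}D_{\max}D_{TV}(\pi^*||\pi_B)(x)$ and cancelling the common factor $2D_{TV}(\pi^*||\pi_B)(x)$ reduces the claim to the state-independent inequality $\overline{\mathrm{T}}D_{\max}\geq\overline{\mathcal D}+\overline{\mathrm{T}}\max_y\epsilon^*(y)$.

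The main obstacle is the self-referential character of this last step: the upper bound on $\max L_{\epsilon^*}$ already contains $\max\epsilon^*$, so one must feed the assumption back in and check that the algebra actually closes. Setting $A=\overline{\mathrm{T}}D_{\max}$ and $B=\overline{\mathcal D}$, with $B\leq A$ since no constraint value function can exceed $\overline{\mathrm{T}}D_{\max}$, the second argument of the minimum, $D_{\max}(A-B)/(A+B)$, gives exactly $\overline{\mathcal D}+\overline{\mathrm{T}}\max_y\epsilon^*(y)\leq (A^2+B^2)/(A+B)\leq A$, where the final step uses $B^2\leq AB$; this is precisely the required inequality. Finally, combining the two established properties with $L_{\epsilon^*}\geq 0$, $L_{\epsilon^*}=0$ on $\mathcal X\setminus\mathcal X'$, and the super-solution property in \eqref{eq:baseline_lyap} shows $L_{\epsilon^*}\in\mathcal L_{\pi_B}(x_0,d_0)$; and $L_{\epsilon^*}(x)\geq T_{\pi^*,d}[L_{\epsilon^*}](x)$ means $\pi^*\in\mathcal F_{L_{\epsilon^*}}(x)$ for every $x$, so the induced feasible set $\mathcal F_{L_{\epsilon^*}}$ contains the optimal policy $\pi^*$, as claimed.
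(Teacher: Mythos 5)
Your proof is correct, and its first half (the threshold condition $L_{\epsilon^*}(x_0)\leq d_0$ via the first argument of the minimum in Assumption~\ref{assumption:pi_b}) coincides with the paper's. For the harder half, the Bellman inequality $L_{\epsilon^*}(x)\geq T_{\pi^*,d}[L_{\epsilon^*}](x)$, you take a genuinely different route. The paper goes back to the machinery of Lemma~\ref{lem:existence_L}: it recalls the \emph{exact} cost-shaping term $\epsilon$ satisfying $(I-P^*)(I-P_{B})^{-1}(d+\epsilon)=d$, adds $(I-P^*)(I-P_{B})^{-1}\epsilon^*$ to both sides, and thereby reduces the claim to the componentwise nonnegativity of $(I-P^*)(I-P_{B})^{-1}(\epsilon^*-\epsilon)$; this is then closed using the sandwich bounds $2(\overline{\mathrm{T}}D_{\max}-\overline{\mathcal D}^*)D_{TV}(\pi^*||\pi_B)(x)\leq\epsilon^*(x)-\epsilon(x)\leq 2(\overline{\mathrm{T}}D_{\max}+\overline{\mathcal D}^*)D_{TV}(\pi^*||\pi_B)(x)$ with $\overline{\mathcal D}^*=\max_x\mathcal D_{\pi^*}(x)$, together with the monotonicity $\overline{\mathcal D}^*\leq\overline{\mathcal D}$. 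Your argument bypasses the exact $\epsilon$ and the matrix identity entirely: the operator decomposition identifies $L_{\epsilon^*}-T_{\pi^*,d}[L_{\epsilon^*}]$ as $\epsilon^*$ plus a policy-difference term, which you control by H\"older against $\max_{x'}L_{\epsilon^*}(x')\leq\overline{\mathcal D}+\overline{\mathrm{T}}\max_y\epsilon^*(y)$, and you close the algebra using $\overline{\mathcal D}\leq\overline{\mathrm{T}}D_{\max}$ rather than $\overline{\mathcal D}^*\leq\overline{\mathcal D}$ (your cancellation of the factor $2D_{TV}(\pi^*||\pi_B)(x)$ is harmless: when it vanishes, both sides are zero and the inequality is trivial). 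What your approach buys is self-containedness and a more elementary reduction --- no reference to the unknowable shaping cost $\epsilon$, to the Woodbury-type identity, or to sign properties of $(I-P^*)(I-P_{B})^{-1}$, only the fixed-point characterization of $L_{\epsilon^*}$ under $T_{\pi_B,d+\epsilon^*}$; what the paper's version buys is an explicit quantification of the slack $\epsilon^*-\epsilon$ between the candidate and the exact shaping cost, which makes the role of Lemma~\ref{lem:existence_L} transparent. Both proofs exploit the two arguments of Assumption~\ref{assumption:pi_b} with the same division of labor, and both wrap up identically by concluding $L_{\epsilon^*}\in\mathcal L_{\pi_B}(x_0,d_0)$ and $\pi^*(\cdot|x)\in\mathcal F_{L_{\epsilon^*}}(x)$ for all $x\in\mathcal X'$.
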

The proof of this theorem is given in Appendix \ref{appendix:thm:L_star}.  Suppose the distance between the baseline policy and the optimal policy can be estimated effectively. Using the above result, one can immediately determine if the set of $L_{\epsilon^*}-$induced policies contain an optimal policy.
 Equipped with the set of $L_{\epsilon^*}-$induced feasible policies, consider the following \emph{safe} Bellman operator:
\begin{equation}\label{eq:Bellman_lyap}
T[V](x)=\left\{\begin{array}{cl}
\min_{\pi\in\mathcal F_{L_{\epsilon^*}}(x)}T_{\pi,c}[V](x)&\text{if $x\in\mathcal X'$}\\
0&\text{otherwise}
\end{array}\right..
\end{equation}
Using standard analysis of Bellman operators, one can show that $T$ is a monotonic and contraction operator (see Appendix \ref{sec:appendix:safety_aware_bellman} for proof). This further implies that the solution of the fixed point equation $T[V](x)=V(x)$, $\forall x\in\mathcal X$, is unique. Let $V^*$ be such a value function. The following theorem shows that under Assumption \ref{assumption:pi_b}, $V^*(x_0)$ is a solution to problem $\mathcal{OPT}$. 
\begin{theorem}\label{thm:safe_opt}
Suppose the baseline policy $\pi_{B}$ satisfies Assumption \ref{assumption:pi_b}. Then, the fixed-point solution at $x=x_0$, i.e., $V^*(x_0)$, is equal to the solution of problem $\mathcal{OPT}$. Furthermore, an optimal policy can be constructed by $\pi^*(\cdot|x)\!\in\!\arg\min_{\pi\in\mathcal F_{L_{\epsilon^*}}(x)}T_{\pi,c}[V^*](x)$, $\forall x\!\in\!\mathcal X'$.
\end{theorem}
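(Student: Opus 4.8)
The plan is to run the standard greedy-policy extraction argument for dynamic programming, but restricted to the $L_{\epsilon^*}$-induced admissible sets, and then to use Theorem~\ref{thm:L_star} to pin down the two matching inequalities. First I would extract a stationary greedy policy $\hat\pi$ from the fixed point. Each set $\mathcal F_{L_{\epsilon^*}}(x)$ is a compact subset of the simplex (cut out by the single linear inequality $T_{\pi,d}[L_{\epsilon^*}](x)\le L_{\epsilon^*}(x)$) and $\pi\mapsto T_{\pi,c}[V^*](x)$ is linear, so the minimum defining $T[V^*](x)$ is attained; hence $\hat\pi(\cdot|x)\in\arg\min_{\pi\in\mathcal F_{L_{\epsilon^*}}(x)}T_{\pi,c}[V^*](x)$ is well defined for every $x\in\mathcal X'$. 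By the fixed-point equation $V^*=T[V^*]$ this gives $V^*(x)=T_{\hat\pi,c}[V^*](x)$ on $\mathcal X'$, so $V^*$ is the fixed point of the contraction $T_{\hat\pi,c}$; by uniqueness of that fixed point, $V^*=\mathcal C_{\hat\pi}$.

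The next step is to establish feasibility of $\hat\pi$. Since $\hat\pi(\cdot|x)\in\mathcal F_{L_{\epsilon^*}}(x)$ for all $x$, we have $T_{\hat\pi,d}[L_{\epsilon^*}]\le L_{\epsilon^*}$ pointwise on $\mathcal X'$; iterating the monotone contraction $T_{\hat\pi,d}$ and passing to the limit (exactly the argument already used in the text to show that $L$-induced policies are feasible) yields $\mathcal D_{\hat\pi}(x)\le L_{\epsilon^*}(x)$. Combined with $L_{\epsilon^*}(x_0)\le d_0$, which is part of the statement $L_{\epsilon^*}\in\mathcal L_{\pi_B}(x_0,d_0)$ established in Theorem~\ref{thm:L_star}, this gives $\mathcal D_{\hat\pi}(x_0)\le d_0$, so $\hat\pi$ is feasible for $\mathcal{OPT}$. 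Hence $V^*(x_0)=\mathcal C_{\hat\pi}(x_0)\ge\mathcal C_{\pi^*}(x_0)$, the optimal value, which is one of the two inequalities.

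For the reverse inequality I would invoke the second conclusion of Theorem~\ref{thm:L_star}, namely that an optimal policy $\pi^*$ lies in $\mathcal F_{L_{\epsilon^*}}$, i.e. $\pi^*(\cdot|x)\in\mathcal F_{L_{\epsilon^*}}(x)$ for every $x\in\mathcal X'$. Then $\pi^*$ is admissible in the minimization defining $T$, so $V^*(x)=T[V^*](x)\le T_{\pi^*,c}[V^*](x)$ on $\mathcal X'$. Applying the monotone operator $T_{\pi^*,c}$ repeatedly to this inequality produces a non-decreasing sequence $T_{\pi^*,c}^k[V^*]$ that converges to the fixed point $\mathcal C_{\pi^*}$, whence $V^*\le\mathcal C_{\pi^*}$ pointwise, and in particular $V^*(x_0)\le\mathcal C_{\pi^*}(x_0)$. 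Together with the previous paragraph this forces $V^*(x_0)=\mathcal C_{\pi^*}(x_0)$, the value of $\mathcal{OPT}$; and since $\hat\pi$ is feasible with $\mathcal C_{\hat\pi}(x_0)=V^*(x_0)$ equal to this optimum, $\hat\pi$ is itself an optimal policy, which proves the ``furthermore'' claim.

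The contraction and monotonicity manipulations are routine and are already used elsewhere in the paper, so they are not where the difficulty lies. The step doing the real work is the appeal to Theorem~\ref{thm:L_star}: feasibility of every policy in $\mathcal F_{L_{\epsilon^*}}$ alone would only deliver $V^*(x_0)\ge\mathcal{OPT}$, and it is precisely the guarantee that $\mathcal F_{L_{\epsilon^*}}$ \emph{contains} an optimal policy, which is where Assumption~\ref{assumption:pi_b} on the closeness of $\pi_B$ to $\pi^*$ enters, that supplies the matching upper bound and rules out the safe Bellman operator being overly conservative.
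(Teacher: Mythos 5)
Your proof is correct and follows essentially the same route as the paper's: extract the greedy policy from the fixed point, show it is feasible by iterating $T_{\hat\pi,d}[L_{\epsilon^*}]\le L_{\epsilon^*}$, and use Theorem~\ref{thm:L_star} (that $\pi^*\in\mathcal F_{L_{\epsilon^*}}$) to obtain $V^*\le T_{\pi^*,c}[V^*]$ and hence $V^*\le\mathcal C_{\pi^*}$ by iteration. Your closing remark correctly identifies the role of Assumption~\ref{assumption:pi_b}, exactly as in the paper's argument.
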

The proof of this theorem can be found in Appendix \ref{appendix:thm:safe_opt}.
This shows that under Assumption \ref{assumption:pi_b} an optimal policy of problem $\mathcal{OPT}$ can be solved using standard DP algorithms. Notice that verifying whether $\pi_B$ satisfies this assumption is still challenging because one requires a good estimate of $D_{TV}(\pi^*||\pi_{B})$. Yet to the best of our knowledge, this is \emph{the first result} that connects the optimality of CMDP to Bellman's principle of optimality. 
Another key observation is that, in practice we will explore ways of approximating $\epsilon^*$ via \emph{bootstrapping}, and empirically show that this approach achieves good performance while guaranteeing safety at each iteration. In particular, in the next section we will illustrate how to systematically construct a Lyapunov function using an LP in the planning scenario, and using function approximation in RL for guaranteeing safety during learning.

\section{Safe Reinforcement Learning Using Lyapunov Functions}
\vspace{-0.1in}

Motivated by the challenge of computing a Lyapunov function $L_{\epsilon^*}$ such that its induced set of policies contains $\pi^*$, in this section we approximate $ \epsilon^*$ with an auxiliary constraint cost $\widetilde\epsilon$, which is the \emph{largest} auxiliary cost that satisfies the Lyapunov condition: $L_{\widetilde \epsilon}(x)\geq T_{\pi_B,d}[L_{\widetilde \epsilon}](x),\,\,\forall x\in\mathcal X'$ and the safety condition $L_{\widetilde \epsilon}(x_0)\leq d_0$. 
The larger the $\widetilde\epsilon$, the larger the set of policies $\mathcal F_{L_{\widetilde \epsilon}}$.  Thus by choosing the largest such auxiliary cost, we hope to have a better chance of including the optimal policy $\pi^*$ in the set of feasible policies.
So, we consider the following LP problem:

\vspace{-0.125in}
\begin{small}
\begin{equation}\label{eq:opt_eps_baseline}
\widetilde \epsilon\in\!\arg\!\max_{\epsilon:\mathcal X'\rightarrow\mathbb R_{\geq 0}}\!\!\Bigg\{\!\sum_{x\in\mathcal X'}\epsilon(x)\!:d_0 - \mathcal D_{\pi_B}(x_0)\geq\mathbf 1(x_0)^\top(I- \{P(x'|x,\pi_B)\}_{x,x'\in\mathcal X'})^{-1}\epsilon
\!\Bigg\}.
\end{equation}
\end{small}
\vspace{-0.125in}

Here $\mathbf 1(x_0)$ represents a one-hot vector in which the non-zero element is located at $x=x_0$. 

On the other hand, whenever $\pi_B$ is a feasible policy, then the problem in \eqref{eq:opt_eps_baseline} always has a non-empty solution\footnote{This is due to the fact that $d_0 - \mathcal D_{\pi_B}(x_0)\geq 0$, and therefore $\widetilde \epsilon(x)=0$ is a feasible solution.}. 
Furthermore, notice that $\mathbf 1(x_0)^\top(I- \{P(x'|x,\pi_B)\}_{x,x'\in\mathcal X'})^{-1}\mathbf 1(x)$ represents the total visiting probability $\mathbb E[\sum_{t=0}^{\mathrm{T}^*-1}\mathbf 1\{x_t=x\}\mid x_0,\pi_B]$ from initial state $x_0$ to any state $x\in\mathcal X'$, which is a non-negative quantity. Therefore, using the extreme point argument in LP \citep{luenberger1984linear}, one can simply conclude that   
the maximizer of problem \eqref{eq:opt_eps_baseline} is an indicator function whose non-zero element locates at state $\underline x$ that corresponds to the minimum total visiting probability from $x_0$, i.e., $\widetilde \epsilon(x)={(d_0 - \mathcal D_{\pi_B}(x_0))\cdot\mathbf 1\{x=\underline x\}}/{\mathbb E[\sum_{t=0}^{\mathrm{T}^*-1}\mathbf 1\{x_t=\underline x\}\mid x_0,\pi_B]}\geq 0,\,\forall x\in\mathcal X'$, 
where $\underline x\in\arg\min_{x\in\mathcal X'}\mathbb E[\sum_{t=0}^{\mathrm{T}^*-1}\mathbf 1\{x_t=x\}\mid x_0,\pi_B]$. 
On the other hand, suppose we further restrict the structure of $\widetilde\epsilon(x)$ to be a constant function, i.e., $\widetilde\epsilon(x)=\widetilde\epsilon$, $\forall x\in\mathcal X'$. Then one can show that the maximizer is given by $\widetilde \epsilon(x)={(d_0 - \mathcal D_{\pi_B}(x_0))}/{\mathbb E[\mathrm{T}^*\mid x_0,\pi_B]}$, $\forall x\in\mathcal X'$, where $
\mathbf 1(x_0)^\top(I- \{P(x'|x,\pi_B)\}_{x,x'\in\mathcal X'})^{-1}[1,\ldots,1]^\top= \mathbb E[\mathrm{T}^*\mid x_0,\pi_B]$ is the expected stopping time of the transient MDP. In cases when computing the expected stopping time is expensive, then one reasonable approximation is to replace the denominator of $\widetilde\epsilon$ with the upper-bound $\overline{\mathrm{T}}$.

Using this Lyapunov function $L_{\widetilde \epsilon}$, we propose the safe policy iteration (SPI) in Algorithm \ref{alg:safe_PI}, in which the Lyapunov function is updated via \emph{bootstrapping}, i.e., at each iteration $L_{\widetilde \epsilon}$ is re-computed using \eqref{eq:opt_eps_baseline}, w.r.t. the current baseline policy. Properties of SPI are summarized in the following proposition. 

\vspace{-0.05in}
\begin{algorithm}
\begin{small}
\begin{algorithmic}
\STATE {\bf Input:} Initial feasible policy $\pi_0$;
\FOR{$k= 0,1,2,\ldots$}
\STATE {\bf Step 0:} With $\pi_b=\pi_k$, evaluate the Lyapunov function $L_{\epsilon_k}$, where $\epsilon_k$ is a solution of  \eqref{eq:opt_eps_baseline}
\STATE {\bf Step 1:} Evaluate the cost value function $V_{\pi_k}(x)=\mathcal C_{\pi_k}(x)$; Then update the policy by solving the following problem:
$
\pi_{k+1}(\cdot|x)\in\arg\!\min_{\pi\in\mathcal F_{L_{\epsilon_k}}(x)}T_{\pi,c}[V_{\pi_k}](x),\forall x\in\mathcal X'
$
\ENDFOR  
\STATE {\bf Return} Final policy $\pi_{k^*}$
\end{algorithmic}
\end{small}
\caption{Safe Policy Iteration (SPI)}
\label{alg:safe_PI}
\end{algorithm}
\vspace{-0.05in}

\begin{proposition}\label{prop:properties_safe_PI}
Algorithm \ref{alg:safe_PI} has following properties: \emph{(i) Consistent Feasibility}, i.e., suppose the current policy $\pi_k$ is feasible, then the updated policy $\pi_{k+1}$ is also feasible, i.e., $\mathcal D_{\pi_k}(x_0)\leq d_0$ implies $\mathcal D_{\pi_{k+1}}(x_0)\leq d_0$; (ii) Monotonic Policy Improvement, i.e., the cumulative cost induced by $\pi_{k+1}$ is lower than or equal to that by $\pi_k$, i.e., $\mathcal C_{\pi_{k+1}}(x)\leq \mathcal C_{\pi_k}(x)$ for any $x\in\mathcal X'$; {(iii) Convergence}, i.e., suppose a strictly concave regularizer is added to optimization problem \eqref{eq:opt_eps_baseline} and a strictly convex regularizer is added to policy optimization step. Then the policy sequence asymptotically converges.
\end{proposition}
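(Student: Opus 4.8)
The plan is to read parts (i) and (ii) as adaptations of the classical feasibility and monotone-improvement arguments of policy iteration to the Lyapunov-constrained setting, and to isolate (iii) as the genuinely delicate part that the two regularizers are introduced to handle. The single observation that drives both (i) and (ii) is that, at every iteration, the current policy $\pi_k$ lies in its own induced feasible set $\mathcal{F}_{L_{\epsilon_k}}$, so the constrained minimization in Step~1 is always carried out over a non-empty set that contains $\pi_k$.

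For (i), I would first show $L_{\epsilon_k}\in\mathcal{L}_{\pi_k}(x_0,d_0)$. Writing $P_{\pi_k}=\{P(x'|x,\pi_k)\}_{x,x'\in\mathcal{X}'}$ for the transient transition matrix and using $L_{\epsilon}=(I-P_{\pi_k})^{-1}(d+\epsilon)$, one gets $L_{\epsilon_k}(x_0)=\mathcal{D}_{\pi_k}(x_0)+\mathbf{1}(x_0)^\top(I-P_{\pi_k})^{-1}\epsilon_k$, and the LP constraint in \eqref{eq:opt_eps_baseline} is precisely $\mathbf{1}(x_0)^\top(I-P_{\pi_k})^{-1}\epsilon_k\leq d_0-\mathcal{D}_{\pi_k}(x_0)$, hence $L_{\epsilon_k}(x_0)\leq d_0$. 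The Lyapunov-decrease and boundary conditions hold by construction of $L_{\epsilon_k}$ as the fixed point of $T_{\pi_k,d+\epsilon_k}$ with $\epsilon_k\geq 0$, exactly as in the derivation of \eqref{eq:baseline_lyap}. Since Step~1 selects $\pi_{k+1}\in\mathcal{F}_{L_{\epsilon_k}}$, the policy $\pi_{k+1}$ is $L_{\epsilon_k}$-induced, so by the contraction argument already recorded in the text ($\mathcal{D}_\pi(x)=\lim_{n\to\infty}T^n_{\pi,d}[L](x)\leq L(x)$), we obtain $\mathcal{D}_{\pi_{k+1}}(x_0)\leq L_{\epsilon_k}(x_0)\leq d_0$. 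Feasibility of $\pi_k$ guarantees $d_0-\mathcal{D}_{\pi_k}(x_0)\geq 0$, so the LP is non-empty (the footnoted $\epsilon\equiv 0$ is admissible), and the claim follows by induction from the feasible $\pi_0$.

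For (ii), I would first verify $\pi_k\in\mathcal{F}_{L_{\epsilon_k}}(x)$: since $d$ and $\epsilon_k$ depend only on the state and $\epsilon_k\geq 0$, we have $L_{\epsilon_k}=T_{\pi_k,d+\epsilon_k}[L_{\epsilon_k}]=T_{\pi_k,d}[L_{\epsilon_k}]+\epsilon_k\geq T_{\pi_k,d}[L_{\epsilon_k}]$, so $\pi_k$ is admissible in Step~1. Because $\pi_{k+1}$ minimizes $T_{\pi,c}[V_{\pi_k}]$ over this set, $T_{\pi_{k+1},c}[V_{\pi_k}](x)\leq T_{\pi_k,c}[V_{\pi_k}](x)=V_{\pi_k}(x)$ for all $x\in\mathcal{X}'$. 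Monotonicity of $T_{\pi_{k+1},c}$ then yields the non-increasing chain $V_{\pi_k}\geq T_{\pi_{k+1},c}[V_{\pi_k}]\geq T^n_{\pi_{k+1},c}[V_{\pi_k}]\to \mathcal{C}_{\pi_{k+1}}$, i.e.\ $\mathcal{C}_{\pi_{k+1}}(x)\leq\mathcal{C}_{\pi_k}(x)$ for all $x$.

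Part (iii) is the main obstacle. With the strictly concave regularizer, \eqref{eq:opt_eps_baseline} has a unique maximizer depending continuously on $\pi_k$ (Berge's maximum theorem over the compact constraint set), and with the strictly convex regularizer the update in Step~1 is single-valued and continuous; composing them, the iteration $\pi_{k+1}=\mathcal{G}(\pi_k)$ is a continuous self-map of the compact policy simplex. The value sequence $\{\mathcal{C}_{\pi_k}\}$ is non-increasing by (ii) and bounded below by $0$, hence converges to some $V_\infty$. I would then take any subsequential limit $\pi_{k_j}\to\pi_\infty$; continuity of $\pi\mapsto\mathcal{C}_\pi$ gives $\mathcal{C}_{\pi_\infty}=V_\infty$, continuity of $\mathcal{G}$ gives $\pi_{k_j+1}\to\mathcal{G}(\pi_\infty)$, and since $\{\mathcal{C}_{\pi_{k_j+1}}\}$ is a subsequence of the convergent value sequence, $\mathcal{C}_{\mathcal{G}(\pi_\infty)}=V_\infty=\mathcal{C}_{\pi_\infty}$. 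Equality of the (regularized) improvement objective at $\pi_\infty$ and at its unique minimizer $\mathcal{G}(\pi_\infty)$ then forces $\mathcal{G}(\pi_\infty)=\pi_\infty$ by strict convexity, so every limit point is a fixed point of the regularized update. The delicate step I expect to be hardest is upgrading this subsequential, fixed-point-set statement to convergence of the whole policy sequence: this needs either uniqueness of the fixed point or an argument that the regularized cost is a strict Lyapunov function for $\mathcal{G}$ rendering the limit-point set a singleton, and I read the stated ``asymptotically converges'' as convergence to this fixed-point set, which the continuity-plus-monotonicity argument delivers.
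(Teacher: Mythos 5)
Your parts (i) and (ii) are correct and follow essentially the same route as the paper's proof: the paper likewise observes that feasibility of $\pi_k$ makes the LP \eqref{eq:opt_eps_baseline} non-empty (with $\mathcal D_{\pi_k}$ itself serving as a valid Lyapunov function), that $\pi_k$ is admissible in Step~1 so the constrained minimization is well-posed, that recursively applying the Lyapunov inequality along $\pi_{k+1}$ gives $\mathcal D_{\pi_{k+1}}(x_0)\leq L_{\epsilon_k}(x_0)\leq d_0$, and that $T_{\pi_{k+1},c}[V_{\pi_k}]\leq T_{\pi_k,c}[V_{\pi_k}]=V_{\pi_k}$ combined with contraction yields $\mathcal C_{\pi_{k+1}}\leq \mathcal C_{\pi_k}$. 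If anything, you are more explicit than the paper in two places it leaves implicit: checking $L_{\epsilon_k}(x_0)\leq d_0$ directly from the LP constraint via $(I-P_{\pi_k})^{-1}$, and verifying $\pi_k\in\mathcal F_{L_{\epsilon_k}}$ from $T_{\pi_k,d+\epsilon_k}[L_{\epsilon_k}]=L_{\epsilon_k}$ with $\epsilon_k\geq 0$.

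For part (iii) you take a genuinely different route. The paper argues: the value sequence is monotone and bounded below, hence convergent; once improvement ``stops,'' the limiting value is a fixed point of $\min_{\pi\in\mathcal F_{L_K}(x)}T_{\pi,c}[V](x)=V(x)$ whose solution policy is unique by the strictly convex regularizer, and the strictly concave regularizer makes the solution pair of \eqref{eq:opt_eps_baseline} unique, so the updates of $(L_{\epsilon_k},\epsilon_k)$ and of $\pi_k$ halt. You instead build a continuity argument: Berge's maximum theorem plus strict regularization make the composed update $\mathcal G$ a continuous single-valued self-map of the compact policy simplex, and every subsequential limit of $\{\pi_k\}$ is then shown to be a fixed point of $\mathcal G$. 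Your route is structurally cleaner and makes explicit what the paper elides -- the paper's phrase ``there exists $K$ such that improvement stops'' conflates Cauchy convergence of the value sequence with literal stabilization at a finite step, which monotone convergence alone does not give. However, both proofs share the same residual looseness, and you are right to flag it: neither argument upgrades ``all limit points are fixed points'' to convergence of the entire policy sequence without an additional uniqueness or strict-Lyapunov argument. One further small hole in your version: to conclude $\mathcal G(\pi_\infty)=\pi_\infty$ from $\mathcal C_{\mathcal G(\pi_\infty)}=\mathcal C_{\pi_\infty}$, sandwiching only equates the \emph{unregularized} objectives $T_{\mathcal G(\pi_\infty),c}[V_{\pi_\infty}]=T_{\pi_\infty,c}[V_{\pi_\infty}]$; since the minimizer is defined by the \emph{regularized} objective, you still need $R(\pi_\infty)\leq R(\mathcal G(\pi_\infty))$ (or a separate argument) before uniqueness of the regularized minimizer applies. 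Relatedly, both you and the paper implicitly assume the monotonicity of (ii) survives the addition of the regularizer in Step~1, which is not automatic and deserves a remark (e.g., a vanishing or value-dominated regularization weight).
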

The proof of this proposition is given in Appendix \ref{appendix:properties_safe_PI}, and the sub-optimality performance bound of SPI can be found in Appendix \ref{appendix:SPI_performance}.  Analogous to SPI, we also propose a safe value iteration (SVI), in which the Lyapunov function estimate is updated at every iteration via bootstrapping, using the current optimal value estimate. Details of SVI is given in Algorithm \ref{alg:safe_VI}, and its properties are summarized in the following proposition (whose proof is given in Appendix \ref{appendix:properties_safe_VI}). 

\begin{algorithm}[!ht]
\begin{small}
\begin{algorithmic}
\STATE {\bf Input:} Initial $Q-$function $Q_0$; Initial Lyapunov function $L_{\epsilon_0}$ w.r.t. auxiliary cost function $\epsilon_0(x)\!=\!0$;
\FOR{$k= 0,1,2,\ldots$}
\STATE{\bf Step 0:}  Compute $Q-$function
$
Q_{k+1}(x,a)=c(x,a)+\sum_{x'}\!P(x'|x,a)\min_{\pi\in\mathcal F_{L_{\epsilon_k}}(x')}\pi(\cdot|x')^\top Q_k(x',\cdot)
$
and policy $\pi_k(\cdot|x)\in\arg\min_{\pi\in\mathcal F_{L_{\epsilon_k}}(x)}\pi(\cdot|x)^\top Q_k(x,\cdot)$ 
\STATE {\bf Step 1:} With $\pi_B=\pi_k$, construct the Lyapunov function $L_{\epsilon_{k+1}}$, where $\epsilon_{k+1}$ is a solution of  \eqref{eq:opt_eps_baseline}; 
\ENDFOR  
\STATE {\bf Return} Final policy $\pi_{k^*}$
\end{algorithmic}
\end{small}
\caption{Safe Value Iteration (SVI)}
\label{alg:safe_VI}
\end{algorithm}

\begin{proposition}\label{prop:properties_safe_VI}
Algorithm \ref{alg:safe_VI} has following properties: {(i) Consistent Feasibility}; {(ii) Convergence}.
\end{proposition}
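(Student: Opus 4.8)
The plan is to prove the two properties of Algorithm \ref{alg:safe_VI} separately, reusing the feasibility machinery of Section \ref{sec:lyapunov_cmdp} for part (i) and the contraction/monotonicity of the safe Bellman operator \eqref{eq:Bellman_lyap} for part (ii). Throughout I view SVI as interleaving a single application of a constraint-restricted Bellman update (Step 0) with a bootstrapped re-estimation of the Lyapunov function from the current greedy policy (Step 1), so the overall structure parallels the proof of Proposition \ref{prop:properties_safe_PI} for SPI.

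For (i) Consistent Feasibility I would argue by induction on $k$, maintaining the invariant that $L_{\epsilon_k}$ is a valid Lyapunov function with $L_{\epsilon_k}(x_0)\le d_0$, that $\mathcal F_{L_{\epsilon_k}}(x)$ is non-empty for every $x\in\mathcal X'$, and that the extracted policy $\pi_k$ is feasible. The base case is supplied by the given initializer: with $\epsilon_0\equiv 0$ the function $L_{\epsilon_0}=\mathcal D_{\pi_{-1}}$ for a feasible initializing baseline $\pi_{-1}$, so $L_{\epsilon_0}(x_0)\le d_0$ and $\mathcal F_{L_{\epsilon_0}}\ni\pi_{-1}$, whence $\pi_0$ is feasible. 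For the inductive step, feasibility of $\pi_k$ gives $d_0-\mathcal D_{\pi_k}(x_0)\ge 0$, so the LP \eqref{eq:opt_eps_baseline} with $\pi_B=\pi_k$ has a non-empty feasible region (it contains $\epsilon\equiv 0$); its solution $\epsilon_{k+1}$ yields, by the two LP constraints, $T_{\pi_k,d}[L_{\epsilon_{k+1}}](x)\le L_{\epsilon_{k+1}}(x)$ for all $x\in\mathcal X'$ and $L_{\epsilon_{k+1}}(x_0)\le d_0$, i.e. $L_{\epsilon_{k+1}}\in\mathcal L_{\pi_k}(x_0,d_0)$, and $\mathcal F_{L_{\epsilon_{k+1}}}(x)\ni\pi_k$ is non-empty so the $\arg\min$ in the next Step 0 is well defined. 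Since $\pi_{k+1}$ is selected from $\mathcal F_{L_{\epsilon_{k+1}}}(x)$ for every $x$, it satisfies $T_{\pi_{k+1},d}[L_{\epsilon_{k+1}}](x)\le L_{\epsilon_{k+1}}(x)$; iterating this monotone inequality and using that $T_{\pi_{k+1},d}$ is a contraction (exactly the $L$-induced-policy argument of Section \ref{sec:lyapunov_cmdp}) gives $\mathcal D_{\pi_{k+1}}(x)=\lim_{j\to\infty}T^{j}_{\pi_{k+1},d}[L_{\epsilon_{k+1}}](x)\le L_{\epsilon_{k+1}}(x)$, and evaluating at $x_0$ yields $\mathcal D_{\pi_{k+1}}(x_0)\le L_{\epsilon_{k+1}}(x_0)\le d_0$. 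This closes the induction.

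For (ii) Convergence I would encode each iteration as $Q_{k+1}=\mathcal T_k Q_k$, where $\mathcal T_k$ is the constraint-restricted Bellman operator $(\mathcal T_k Q)(x,a)=c(x,a)+\sum_{x'}P(x'|x,a)\min_{\pi\in\mathcal F_{L_{\epsilon_k}}(x')}\pi(\cdot|x')^\top Q(x',\cdot)$. By the same reasoning that establishes monotonicity and contraction of the safe operator in \eqref{eq:Bellman_lyap}, each $\mathcal T_k$ is monotone and a contraction whose modulus $\beta<1$ is uniform in $k$ under the bounded-hitting-time transient assumption (the constraint set enters only through a $\min$ over a non-empty set and does not affect the contraction estimate), so each $\mathcal T_k$ has a unique fixed point $Q^*_k$. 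The obstacle is that $\mathcal T_k$ is time-varying, since $\mathcal F_{L_{\epsilon_k}}$ is bootstrapped, so the Banach theorem does not apply directly and convergence of $Q_k$ hinges on convergence of the operators.

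The hard part, and the crux of the whole argument, is to break the apparent circularity: $Q_k$ settles only once the operators $\mathcal T_k$ do, yet $\mathcal T_k$ is fixed by $L_{\epsilon_k}$, which is bootstrapped from the greedy policy extracted from $Q_{k-1}$. My plan is to (a) show the LP map $\pi_B\mapsto\epsilon$ in \eqref{eq:opt_eps_baseline}, and hence $\epsilon\mapsto\mathcal F_{L_\epsilon}$, depends continuously on the baseline policy, using the closed-form maximizer (the minimum-total-visiting-probability expression), which varies continuously with $P(\cdot|x,\pi_B)$ and with $d_0-\mathcal D_{\pi_B}(x_0)$; (b) observe that all iterates $(Q_k,\pi_k,\epsilon_k)$ lie in a compact set, since costs are bounded by $C_{\max},D_{\max}$ and the hitting time by $\overline{\mathrm{T}}$; and (c) as in the SPI convergence argument of Proposition \ref{prop:properties_safe_PI}, add a strictly convex regularizer to the policy-selection step so that the greedy policy is a single-valued continuous function of $Q$, ruling out oscillation of the constraint sets from ties. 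Under (a)--(c) the composite update $(Q_k,\epsilon_k)\mapsto(Q_{k+1},\epsilon_{k+1})$ is a continuous self-map of a compact set whose fixed points coincide with the coupled system $Q^*=\mathcal T_\infty Q^*$ with $\epsilon_\infty$ optimal for the greedy policy of $Q^*$. I would then close with the asymptotic-contraction estimate $\|Q_{k+1}-Q^*_\infty\|\le\beta\|Q_k-Q^*_\infty\|+\delta_k$, where $\delta_k=\|\mathcal T_kQ^*_\infty-\mathcal T_\infty Q^*_\infty\|$; showing $\delta_k\to 0$ (equivalently, that the bootstrapped constraint sets $\mathcal F_{L_{\epsilon_k}}$ converge) is precisely the main obstacle and is where the continuity of the LP map and the regularized, hence unique and continuous, greedy policy do the essential work, after which $Q_k\to Q^*_\infty$ follows.
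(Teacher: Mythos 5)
Your treatment of part (i) is correct and is essentially the paper's argument: feasibility of $\pi_k$ makes $\mathcal D_{\pi_k}$ a valid Lyapunov function, hence the LP \eqref{eq:opt_eps_baseline} with $\pi_B=\pi_k$ is feasible and $L_{\epsilon_{k+1}}$ exists; then, since $\pi_{k+1}\in\mathcal F_{L_{\epsilon_{k+1}}}(\cdot)$, recursively applying the constraint inequality and using transience gives $\mathcal D_{\pi_{k+1}}(x_0)\leq L_{\epsilon_{k+1}}(x_0)\leq d_0$.

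Part (ii), however, has a genuine gap, and it is exactly the one you flag yourself without resolving. Your whole convergence argument funnels through the estimate $\|Q_{k+1}-Q^*_\infty\|\leq\beta\|Q_k-Q^*_\infty\|+\delta_k$ with $\delta_k=\|\mathcal T_kQ^*_\infty-\mathcal T_\infty Q^*_\infty\|\rightarrow 0$, but establishing $\delta_k\rightarrow 0$ requires the constraint sets $\mathcal F_{L_{\epsilon_k}}$ to converge, which requires $\pi_k$ to converge, which (even with a regularizer making the greedy map single-valued and continuous in $Q$) requires $Q_k$ to converge --- the very statement being proved. Your ingredients (a)--(c) do not break this circle: continuity of the composite update on a compact set gives you \emph{existence} of a fixed point (Brouwer), not convergence of the iterates to it; a continuous self-map of a compact set can cycle or wander indefinitely, and uniform contraction of each individual $\mathcal T_k$ does not prevent this when the operators themselves keep moving. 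So the plan, as stated, does not constitute a proof.

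The paper sidesteps this entirely by proving a different notion of convergence that never asks the operators or constraint sets to settle. It reads the SVI iterates as a \emph{non-stationary} finite-horizon DP: the composition $T_{K-1}[\cdots[T_0[V_0]]\cdots](x_0)$ equals the expected cost of running the non-stationary policy $\{\pi_0,\ldots,\pi_{K-1}\}$ for $K$ stages, up to a boundary term bounded by $\mathbb P(x_K\in\mathcal X'\mid x_0,\pi)\cdot\|V_0\|_\infty$. Transience of the MDP (all policies proper, bounded hitting time) forces both this boundary term and the tail cost beyond stage $K$ to vanish as $K\rightarrow\infty$, so the value iterates converge to the cumulative cost $\overline V(x_0)$ of the generated non-stationary policy. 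To salvage your route you would need an extra mechanism forcing $\delta_k\rightarrow 0$ (the monotonicity that drives the SPI convergence proof is not available here, since the bootstrapped constraint sets change between iterations); otherwise the paper's reinterpretation is the cleaner path.
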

To justify the notion of bootstrapping, in both SVI and SPI, the Lyapunov function is updated based on the \emph{best} baseline policy (the policy that is feasible and by far has the lowest cumulative cost). Once the current baseline policy $\pi_{k}$ is \emph{sufficiently close} to an optimal policy $\pi^*$, then by Theorem \ref{thm:L_star} one concludes that the $L_{\widetilde \epsilon}-$induced set of policies contains an optimal policy. Although these algorithms do not have optimality guarantees, empirically they often return a near-optimal policy. 

In each iteration, the policy optimization step in SPI and SVI requires solving $|\mathcal X'|$ LP sub-problems, where each of them has $|\mathcal A|+2$ constraints and has a $|\mathcal A|-$dimensional decision-variable. Collectively, at each iteration its complexity is $O(|\mathcal X'||\mathcal A|^2(|\mathcal A|+2))$. While in the worst case SVI converges in $K=O(\overline{\mathrm{T}})$ steps \cite{bertsekas1995dynamic}, and SPI converges in $K=O(|\mathcal X'||\mathcal A|\overline{\mathrm{T}}\log \overline{\mathrm{T}})$ steps \citep{scherrer2013performance}, in practice $K$ is much smaller than $|\mathcal X'||\mathcal A|$. 
Therefore, even with the additional complexity of policy evaluation in SPI that is $O(\overline{\mathrm{T}} |\mathcal X'|^2)$, or the complexity of updating $Q-$function in SVI that is $O(|\mathcal A|^2 |\mathcal X'|^2)$, the complexity of these methods is $O(K|\mathcal X'||\mathcal A|^3+K|\mathcal X'|^2|\mathcal A|^2)$, which in practice is much lower than that of the dual LP method, whose complexity is $O(|\mathcal X'|^3|\mathcal A|^3)$ (see Section \ref{sec:existing approaches} for details).

\vspace{-0.1in}
\subsection{Lyapunov-based Safe RL Algorithms}\label{sec:safe_RL}
\vspace{-0.1in}

In order to improve scalability of SVI and SPI, we develop two \emph{off-policy} safe RL algorithms, namely safe DQN and safe DPI, which replace
the value and policy updates in safe DP with function approximations. Their pseudo-codes can be found in Appendix \ref{appendix:pseudo_code_safe_RL}. Before going into their details, we first introduce the policy distillation method, which will be later used in the safe RL algorithms. 

\paragraph{Policy Distillation:}
Consider the following LP problem for policy optimization in SVI and SPI:

\vspace{-0.125in}
\begin{small}
\begin{equation}\label{eq:approx_greedy_pol}
\pi'(\cdot|x)\in\arg\min_{\pi\in\Delta}\left\{\pi(\cdot|x)^\top  Q(x,\cdot): (\pi(\cdot|x)-\pi_B(\cdot|x))^\top Q_L(x, \cdot)\leq \widetilde\epsilon'(x)\right\},
\end{equation}
\end{small}
\vspace{-0.125in}

where $Q_L(x, a)=d(x)+\widetilde\epsilon'(x)+\sum_{x'}P(x'|x,a)L_{\widetilde\epsilon'}(x')$ is the state-action Lyapunov function. When the state-space is large (or continuous), explicitly solving for a policy becomes impossible without function approximation. 
Consider a parameterized policy $\pi_\phi$ with weights $\phi$. Utilizing the distillation concept \citep{rusu2015policy}, after computing the optimal action probabilities w.r.t. a batch of states, the policy $\pi_\phi$ is updated by solving
$\phi^*\in\arg\min_{\phi}\frac{1}{m}\sum_{m=1}^M\sum_{t=0}^{\overline{\mathrm{T}}-1} D_{\text{JSD}}(\pi_\phi(\cdot|x_{t,m})\parallel\pi'(\cdot|x_{t,m}))$, where the Jensen-Shannon divergence.
The pseudo-code of distillation is given in Algorithm \ref{alg:policy_distill}.

\paragraph{Safe $Q-$learning (SDQN):}
Here we sample an off-policy mini-batch of state-action-costs-next-state samples from the replay buffer and use it to update the value function estimates that minimize the MSE losses of Bellman residuals.
 Specifically, we first construct the state-action Lyapunov function estimate $\widehat Q_L(x, a;\theta_D,\theta_T)=\widehat Q_{D}(x,a;\theta_D)+\widetilde\epsilon'\cdot \widehat Q_{T}(x,a;\theta_T)$, by  learning the constraint value network $\widehat Q_{D}$ and stopping time value network $\widehat Q_{T}$ respectively. With a current baseline policy $\pi_k$, 
one can use function approximation to approximate the  auxiliary constraint cost (which is the solution of \eqref{eq:opt_eps_baseline},) by
$\widetilde\epsilon'(x)=\widetilde\epsilon'={(d_0 -\pi_k(\cdot|x_0)^\top \widehat Q_{D}(x_0,\cdot;\theta_D))}/{\pi_k(\cdot|x_{0})^\top \widehat Q_{T}(x_0,\cdot;\theta_T)}$.
Equipped with the Lyapunov function, in each iteration one can do a standard DQN update, except that the optimal action probabilities are computed via solving \eqref{eq:approx_greedy_pol}. Details of SDQN is given in Algorithm \ref{alg:safe_dqn}. 

\paragraph{Safe Policy Improvement (SDPI):}
Similar to SDQN, in this algorithm we first sample an off-policy mini-batch of samples from the replay buffer and use it to update the value function estimates (w.r.t. objective, constraint, and stopping-time estimate) that minimize MSE losses. Different from SDQN, in SDPI the value estimation is done using policy evaluation, which means that the objective $Q-$function is trained to minimize the Bellman residual w.r.t. actions generated by the current policy $\pi_k$, instead of the greedy actions. Using the same construction as in SDQN for auxiliary cost $\widetilde\epsilon'$, and state-action Lyapunov function $\widehat Q_L$, we then perform a policy improvement step by computing a set of greedy action probabilities from \eqref{eq:approx_greedy_pol}, and constructing an updated policy $\pi_{k+1}$ using policy distillation. 
Assuming the function approximations (for both value and policy) have low errors,  SDPI resembles several interesting properties from SPI, such as maintaining safety during training and improving policy monotonically. 
To improve learning stability, instead of the full policy update one can further consider a partial update $\pi_{k+1}=(1-\alpha) \pi_k + \alpha\pi'$, where $\alpha\in(0,1)$ is a \emph{mixing constant} that controls safety and exploration \citep{achiam2017constrained, kakade2002approximately}. 
Details of SDPI is summarized in Algorithm \ref{alg:safe_cpi}.

In terms of practical implementations, in Appendix \ref{appendix:practical} we include techniques to improve stability during training, to handle continuous action space, and to scale up policy optimization step in \eqref{eq:approx_greedy_pol}.

\vspace{-0.15in}
\section{Experiments}\label{sec:experiments}
\vspace{-0.1in}

\begin{figure}
\begin{center}
  \begin{tabular}{ccc}
    \includegraphics[width=0.24\columnwidth, angle =270]{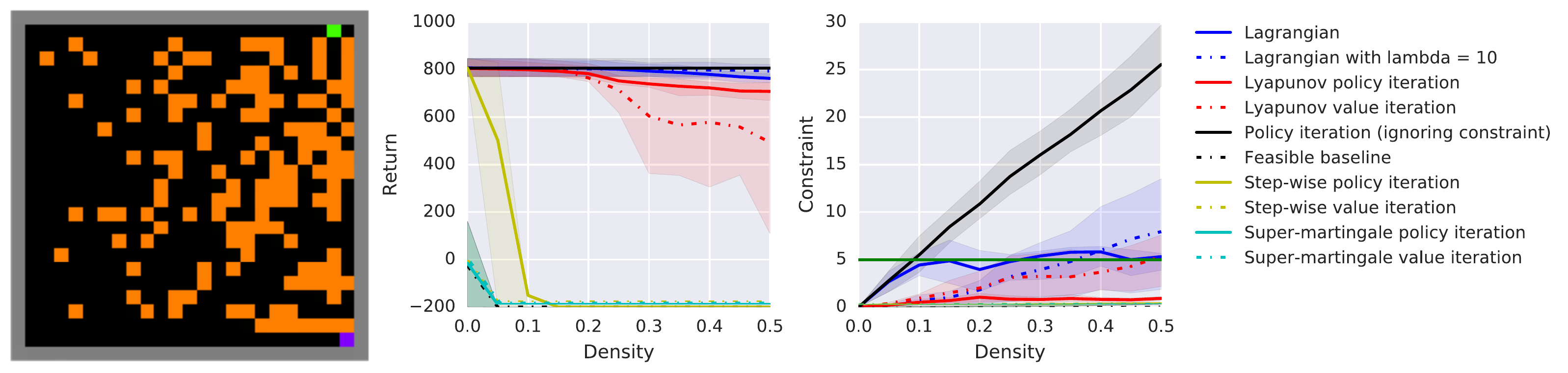} &
  \end{tabular}
\end{center}
  \vspace{-0.1in}
\caption{
Results of various planning algorithms on the grid-world environment with obstacles, with x-axis showing the obstacle density. From the leftmost column, the first figure illustrates the 2D planning domain example ($\rho=0.25$). The second and the third figure show the average return and the average cumulative constraint cost of the CMDP methods respectively. The fourth figure displays all the methods used in the experiment. The shaded regions indicate the $80\%$ confidence intervals. Clearly the safe DP algorithms compute policies that are safe and have good performance.}
\label{fig:results_planning}
  \vspace{-0.2in}
\end{figure}

Motivated by the safety issues of RL in \cite{leike2017ai}, we validate our safe RL algorithms using a stochastic 2D grid-world motion planning problem. In this domain, an agent (e.g., a robotic vehicle) starts in a safe region and its objective is to travel  to a given  destination.
At each time step the agent can move to any of its four neighboring states. Due to sensing and control noise, however, with probability $\delta$ a move to a random neighboring state occurs. To account for fuel usage, the stage-wise cost of each move until reaching the destination is $1$, while the reward achieved for reaching the destination is $1000$. Thus, we would like the agent to reach the destination in the shortest possible number of moves.  In between the starting point and the destination there is a number of obstacles that the agent may pass through but should avoid for safety; each time the agent is on an obstacle it incurs a constraint cost of $1$.  Thus, in the CMDP setting, the agent's goal is to reach the destination in the shortest possible number of moves while passing through obstacles at most $d_0$ times or less. 
For demonstration purposes, we choose a $25 \times 25$ grid-world (see Figure \ref{fig:results_planning}) with a total of $625$ states. We also have a  density ratio $\rho\in(0,1)$ that sets the obstacle-to-terrain ratio. When $\rho$ is close to $0$, the problem is obstacle-free, and if $\rho$ is close to $1$, then the problem becomes more challenging. In the normal problem setting, we choose a density $\rho=0.3$, an error probability $\delta=0.05$, a constraint threshold $d_0=5$, and a maximum horizon of $200$ steps. The initial state is located in $(24,24)$, and the goal is placed in $(0,\alpha)$, where $\alpha\in[0,24]$ is a uniform random variable. To account for statistical significance, the results of each experiment are averaged over $20$ trials.

\textbf{CMDP Planning:}
In this task we have explicit knowledge on reward function and transition probability. The main goal is to compare our safe DP algorithms (SPI and SVI) with the following common CMDP baseline methods: (i) \emph{Step-wise Surrogate}, (ii) \emph{Super-martingale Surrogate}, (iii) \emph{Lagrangian}, and (iv) \emph{Dual LP}. 
Since the methods in (i) and (ii) are surrogate algorithms, we will also evaluate these methods with both value iteration and policy iteration. 
To illustrate the level of sub-optimality, we will also compare the returns and constraint costs of these methods with baselines that are generated by maximizing return or minimizing constraint cost of two separate MDPs. 
The main objective here is to illustrate that safe DP algorithms are less conservative than other surrogate methods, are more numerically stable than the Lagrangian method, and are more computationally efficient than the Dual LP method (see Appendix \ref{appendix:experiment_setup}), without using function approximations. 

Figure \ref{fig:results_planning} presents the results on returns and the cumulative constraint costs of the aforementioned CMDP methods over a spectrum of $\rho$ values, ranging from $0$ to $0.5$. In each method, the initial policy is a conservative baseline policy $\pi_B$ that minimizes the constraint cost.
Clearly from the empirical results, although the polices generated by the four surrogate algorithms are feasible, they do not have significant policy improvements, i.e., return values are close to that of the initial baseline policy. Over all density settings, the SPI algorithm consistently computes a solution that is feasible and has good performance. The solution policy returned by SVI is always feasible, and it has near-optimal performance when the obstacle density is low. However, due to numerical instability its performance degrades as $\rho$ grows. Similarly, the Lagrangian methods return a near-optimal solution over most settings, but due to numerical issues their solutions start to violate constraint as $\rho$ grows. 

\textbf{Safe Reinforcement Learning:}
In this section we present the results of RL algorithms on this safety task.
We evaluate their learning performance on two variants: one in which the observation is a one-hot encoding the of the agent's location, and the other in which the observation is the 2D image representation of the grid map.  In each of these, we evaluate performance when $d_0=1$ and $d_0=5$.
We compare our proposed safe RL algorithms, SDPI and SDQN, to their unconstrained counterparts, DPI and DQN, as well as the Lagrangian approach to safe RL, in which the Lagrange multiplier is optimized via extensive grid search.  Details of the experimental setup is given in Appendix \ref{appendix:experiment_setup}. To make the tasks more challenging, we initialize the RL algorithms with a randomized baseline policy. 

Figure~\ref{fig:results2} shows the results of these methods across all task variants.  
Clearly, we see that SDPI and SDQN can adequately solve the tasks and compute agents with good return performance (similar to that of DQN and DPI in some cases), while guaranteeing safety. Another interesting observation in the SDQN and SDPI algorithms is that, once the algorithm finds a safe policy, then all updated policies \emph{remain} safe during throughout training. On the contrary, the Lagrangian approaches often achieve worse rewards and are more apt to violate the constraints during training \footnote{In Appendix \ref{appendix:experiment_setup}, we also report the results from the Lagrangian method, in which the Lagrange multiplier is learned using gradient ascent method \citep{chow2015risk}, and we observe similar (or even worse) behaviors.}, and the performance is very sensitive to initial conditions. Furthermore, in some cases (in experiment with $d_0=5$ and with discrete observation) the Lagrangian method cannot guarantee safety throughout training.

\begin{figure}
\begin{center}
  \begin{tabular}{ccccc}
    & \small Discrete obs, $d_0=5$ & \small Discrete obs, $d_0=1$ & \small Image obs, $d_0=5$& \small Image obs, $d_0=1$ \\
    \multirow{2}{*}{\rotatebox[origin=c]{90}{\tiny Constraints \hspace{2.1cm} Rewards\hspace{-1.7cm}}} &
    \includegraphics[width=0.2\columnwidth]{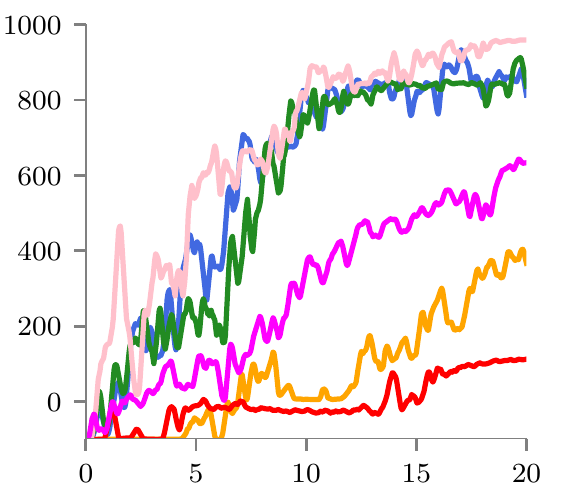} &
    \includegraphics[width=0.2\columnwidth]{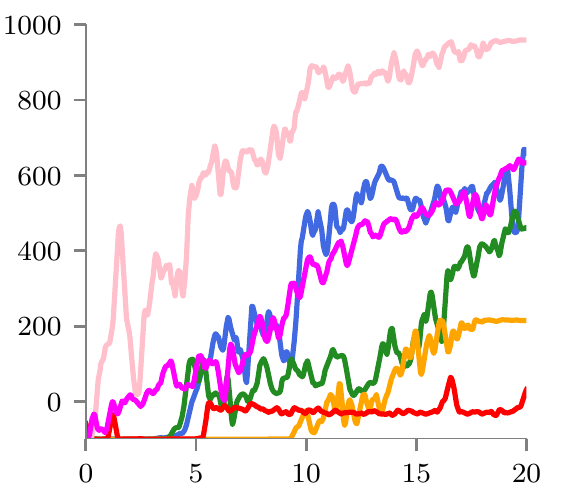} &
    \includegraphics[width=0.2\columnwidth]{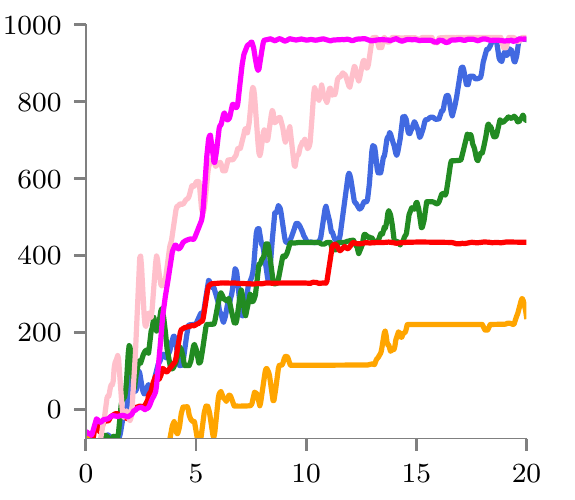} &
    \includegraphics[width=0.2\columnwidth]{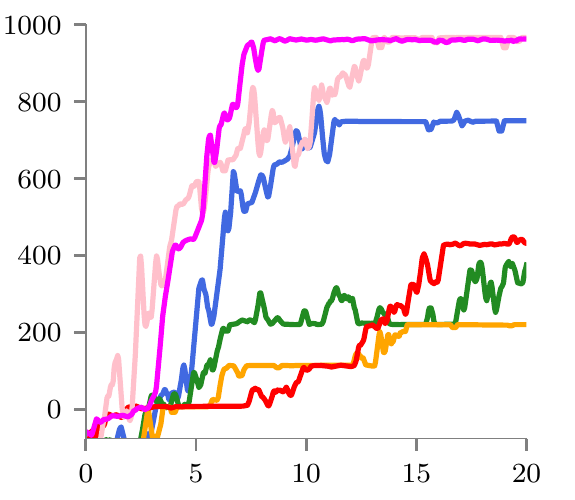} \\
    &
    \includegraphics[width=0.2\columnwidth]{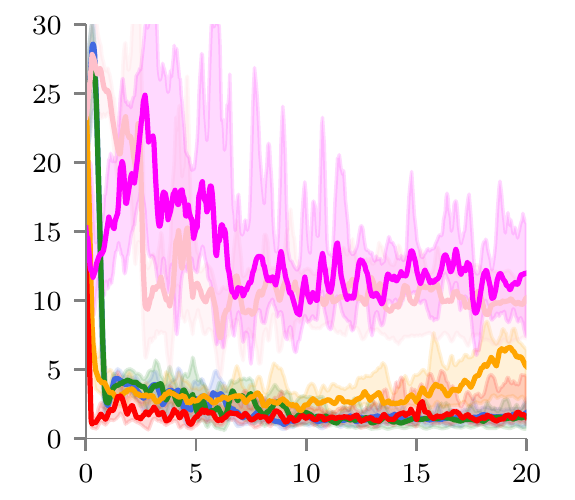} &
    \includegraphics[width=0.2\columnwidth]{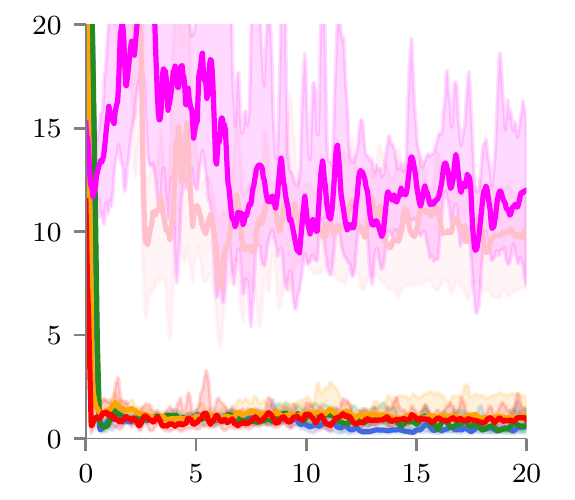} &
    \includegraphics[width=0.2\columnwidth]{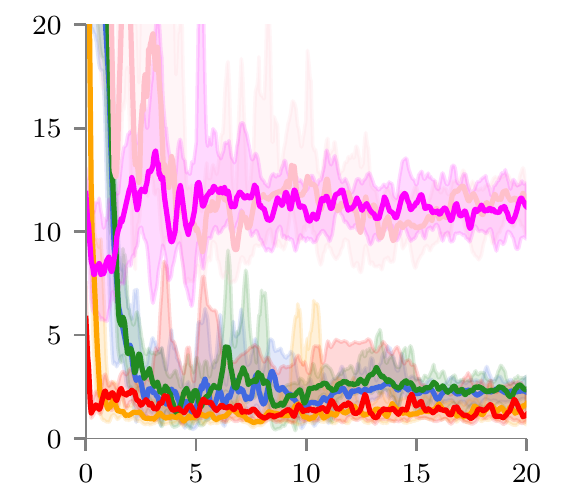} &
    \includegraphics[width=0.2\columnwidth]{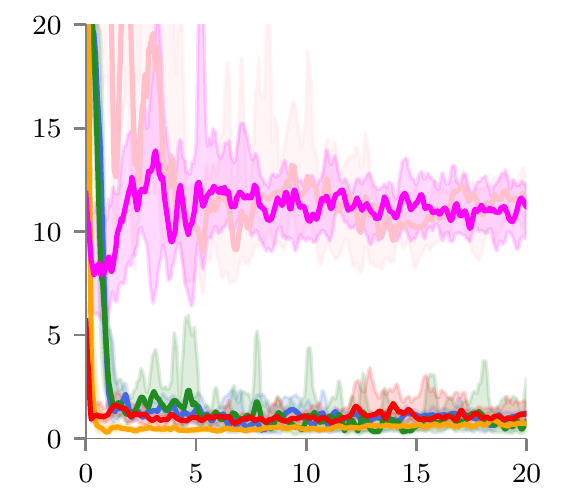} \\
    \multicolumn{5}{c}{\includegraphics[width=0.5\columnwidth]{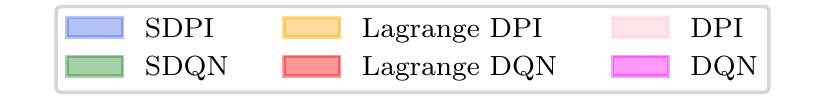}}
  \end{tabular}
\end{center}
  \vspace{-0.1in}
\caption{
Results of various RL algorithms on the grid-world environment with obstacles, with x-axis in thousands of episodes.  We include runs using discrete observations (a one-hot encoding of the agent's position) and image observations (showing the entire RGB 2D map of the world).  We discover that the Lyapunov-based approaches can perform safe learning, despite the fact that the environment dynamics model is not known and that deep function approximations are necessary.
}
  \vspace{-0.2in}
\label{fig:results2}
\end{figure}

\vspace{-0.15in}
\section{Conclusion}\label{sec:conclusion}
\vspace{-0.1in}

In this paper we formulated the problem of safe RL as a CMDP and proposed a \emph{novel} Lyapunov approach to solve CMDPs. We also derived an effective LP-based method to generate Lyapunov functions, such that the corresponding algorithm guarantees feasibility, and optimality under certain conditions. Leveraging these theoretical underpinnings, we showed how Lyapunov approaches can be used to transform DP (and RL) algorithms into their safe counterparts, that only requires straightforward modifications in the algorithm implementations. 
Empirically we validated our theoretical findings in using Lyapunov approach to guarantee safety and robust learning in RL. In general, our work represents a step forward in deploying RL to real-world problems in which guaranteeing safety is of paramount importance.  
Future research will focus on two directions. On
the algorithmic perspective, one major extension is to apply Lyapunov approach to policy gradient algorithms, and compare its performance with CPO in continuous RL problems. On the practical perspective, future work includes evaluating the Lyapunov-based RL algorithms on several real-world testbeds.  

\newpage
\bibliography{safety}

\newpage
\appendix
\section{Safety Constraints in Planning Problems}\label{sec:safety_problem_examples}
To motivate the CMDP formulation studied in this paper, in this section we include two real-life examples of modeling safety using the reachability constraint, and the constraint that limits the agent's visits to undesirable states.

\subsection{Reachability Constraint}
Reachability is a common concept in motion-planning and engineering applications, where for any given policy $\pi$ and initial state $x_0$,  the following the constraint function is considered:
\[
\mathbb P(\exists t\in\{0,1,\ldots, \mathrm{T}^*-1\}, x_t\in\mathcal S_{H}\mid x_0,\pi).
\]
Here $\mathcal S_{H}$ represents the real subset of hazardous regions for the states and actions. Therefore, the constraint cost represents the probability of reaching an unsafe region at any time before the state reaches the terminal state.
To further analyze this constraint function, one notices that
\[
\begin{split}
\mathbb P(\exists t\in\{0,1,\ldots, \mathrm{T}^*-1\}, x_t\in\mathcal S_{H}\mid x_0,\pi)=&\mathbb P\left(\bigcup_{t=0}^{\mathrm{T}^*-1}\bigcap_{j=0}^{t-1}\{x_j\not\in\mathcal S_{H}\}\cap \{x_t\in\mathcal S_{H}\}\mid x_0,\pi\right)\\
=&\mathbb E\left[\sum_{t=0}^{\mathrm{T}^*-1}\prod_{j=0}^{t-1}\mathbf 1\{x_j\not\in\mathcal S_{H}\}\cdot\mathbf 1\{x_t\in\mathcal S_{H}\}\mid x_0,\pi\right].
\end{split}
\]
In this case, a policy $\pi$ is deemed \emph{safe} if the reachability probability to the unsafe region is bounded by threshold $d_0\in(0,1)$, i.e.,
\begin{equation}\label{eq:reachability_constraint}
\mathbb P(\exists t\in\{0,1,\ldots,\mathrm{T}^*-1\}, x_t\in\mathcal S_{H}\mid x_0,\pi)\leq d_0.
\end{equation}

To transform the reachability constraint into a standard CMDP constraint, we define an additional state $s\in\{0,1\}$ that keeps track of the reachability status at time $t$. Here $s_t=1$ indicates the system has never visited a hazardous region up till time $t$, and otherwise $s_t=0$. Let $s_0=1$, we can easily see that by defining the following deterministic transition
\[
s_t=s_{t-1}\cdot\mathbf 1\{x_t\not\in\mathcal S_{H}\},\,\,\forall t\geq 1,
\]
$s_t$ has the following formulation:
$
s_t=\prod_{j=0}^{t-1}\mathbf 1\{x_j\not\in\mathcal S_{H}\}.
$

Collectively, with the state augmentation $\hat x=(x,s)$, one defines the augmented CMDP $(\hat{\mathcal X},\mathcal A,C, \hat D,\hat P, \hat x_0, d_0)$, where $\hat{\mathcal X}=\mathcal X\times\{0,1\}$ is the augmented state space, $\hat d(\hat x)=s \cdot d(x)$
is the augmented constraint cost,
$\hat P(\hat x'|\hat x,a)=P(x'|x,a)\cdot\mathbf 1\{s'=s\cdot\mathbf 1\{x\not\in\mathcal S_{H}\}\}$ is the augmented transition probability, and $\hat{x_0}=(x_0,1)$ is the initial (augmented) state. By using this augmented CMDP, immediately the reachability constraint is equivalent to
$
\mathbb E\big[\sum_{t=0}^{\mathrm{T}^*-1}\hat d(\hat x_t)\mid x_0,\pi\big]\leq d_0.
$

\subsection{Constraint w.r.t. Undesirable Regions of States}
Consider the notion of safety where one restricts the total visiting frequency of an agent to an undesirable region (of states). This notion of safety appears in applications such as system maintenance, in which the system can only tolerate its state to visit  (in expectation) a hazardous region, namely $\mathcal S_{H}$, for a fixed number of times. Specifically, for given initial state $x_0$, consider the following constraint that bounds the total frequency of visiting $\mathcal S_{H}$ with a pre-defined threshold $d_0$, i.e.,
$
\mathbb E\big[\sum_{t=0}^{\mathrm{T}^*-1} d(x_t)\mid x_0,\pi\big]\leq d_0,
$
where $d(x_t)=\mathbf 1\{x_t\in\mathcal S_{H}\}$.
To model this notion of safety using a CMDP, one can rewrite the above constraint using the constraint immediate cost $d(x)=\mathbf 1\{x\in\mathcal S_{H}\}$, and the constraint threshold $d_0$. 
To study the connection between the reachability constraint, and the above constraint w.r.t. undesirable region, notice that 
\[
\mathbb  E\left[\sum_{t=0}^{\mathrm{T}^*-1}\prod_{j=0}^{t-1}\mathbf 1\{x_j\not\in\mathcal S_{H}\}\cdot\mathbf 1\{x_t\in\mathcal S_{H}\}\mid x_0,\pi\right]\leq \mathbb E\left[\sum_{t=0}^{\mathrm{T}^*-1}\mathbf 1\{x_t\in\mathcal S_{H}\}\mid x_0,\pi\right].
\]
This clearly indicates that any policies which satisfies the constraint w.r.t. undesirable region, also satisfies the reachability constraint.

\newpage
\section{Existing Approaches for Solving CMDPs}\label{sec:existing approaches}
Before going to the main result, we first revisit several existing CMDP algorithms in the literature, which later serve as the baselines for comparing with our safe CMDP algorithms. For the sake of brevity, we will only provide an overview of these approaches here and defer their details to Appendix \ref{sec:existing_approaches_details}.

\paragraph{The Lagrangian Based Algorithm:}
The standard way of solving problem $\mathcal{OPT}$ is by applying the Lagrangian method. To start with, consider the following minimax problem:
$
\min_{\pi\in\Delta}\max_{\lambda\geq 0} \,\,\mathcal C_\pi(x_0) + \lambda (\mathcal D_\pi(x_0) - d_0),
$
where $\lambda$ is the Lagrange multiplier w.r.t. the CMDP constraint. According to Theorem 9.9 and Theorem 9.10 in \cite{altman1999constrained}, the optimal policy $\pi^*$ of problem $\mathcal{OPT}$ can be calculated by solving the following Lagrangian function
$
\pi^*\in\arg\min_{\pi\in\Delta} \mathcal C_\pi(x_0) + \lambda^*_{d_0} \mathcal D_\pi(x_0),
$
where $\lambda^*_{d_0}$ is the optimal Lagrange multiplier. 
Utilizing this result, one can compute the saddle point pair $(\pi^*,\lambda^*)$ using primal-dual iteration. Specifically, for a given $\lambda\geq 0$, solve the policy minimization problem using standard dynamic programming with $\lambda-$parametrized Bellman operator $T_\lambda[V](x)=\min_{\pi\in\Delta(x)}T_{\pi,c+\lambda d}[V](x)$ if $x\in\mathcal X$; For a given policy $\pi$, solve for the following linear optimization problem: $\max_{\lambda\geq 0} \,\,\mathcal C_\pi(x_0) + \lambda (\mathcal D_\pi(x_0) - d_0)$. Based on Theorem 9.10 in \cite{altman1999constrained}, this procedure will asymptotically converge to the saddle point solution.
However, this algorithm presents several major challenges. 
(i) In general there is no known convergence rate guarantees, several studies  \citep{lee2017first} also showed that using primal-dual first-order iterative method to find saddle point may run into numerical instability issues; (ii) Choosing a good initial estimate of the Lagrange multiplier is not intuitive; (iii) Following the same arguments from \cite{achiam2017constrained}, during iteration the policy may be infeasible w.r.t. problem $\mathcal{OPT}$, and feasibility is guaranteed after the algorithm converges. This is hazardous in RL when one needs to execute the intermediate policy (which may be unsafe) during training. 

\paragraph{The Dual LP Based Algorithm:}
Another method of solving problem $\mathcal{OPT}$ is based on computing its \emph{occupation measures} w.r.t. the optimal policy. In transient MDPs, for any given policy $\pi$ and initial state $x_0$, the state-action occupation measure is $
\rho_\pi(x,a)=\mathbb E\left[\sum_{t=0}^{\mathrm{T}^*-1}\mathbf 1\{x_t=x,a_t=a\}\mid x_0,\pi\right]
$, which characterizes the total visiting probability of state-action pair $(x,a)\in\mathcal X\times\mathcal A$, induced by policy $\pi$ and initial state $x_0$. Utilizing this quantity, Theorem 9.13 in \cite{altman1999constrained}, has shown that problem $\mathcal{OPT}$ can be reformulated as a linear programming (LP) problem (see Equation \eqref{eq:dual_LP_1} to \eqref{eq:dual_LP_2} in Appendix \ref{sec:existing_approaches_details}), whose decision variable is of dimension $|\mathcal X'||\mathcal A|$, and it has $2|\mathcal X'||\mathcal A| + 1$ constraints. Let $\rho^*$ be the solution of this LP, the optimal Markov stationary policy is given by $\pi^*(a|x)={\rho^*(x,a)}/{\sum_{a\in\mathcal A}\rho^*(x,a)}$. To solve this problem, one can apply the standard algorithm such as interior point method, which is a strong polynomial time algorithm with complexity $O(|\mathcal X'|^2|\mathcal A|^2 (2|\mathcal X'||\mathcal A| + 1))$ \citep{boyd2004convex}.
While this is a straight-forward methodology, it can only handle CMDPs with finite state and action spaces. Furthermore, this approach is computationally expensive when the size of these spaces are large. To the best of our knowledge, it is also unclear how to extend this approach to RL, when transition probability and immediate reward/constraint reward functions are unknown.

\paragraph{Step-wise Constraint Surrogate Approach:}
This approach transforms the multi-stage CMDP constraint into a sequence of step-wise constraints, where each step-wise constraint can be directly embedded into set of admissible actions in the Bellman operator. 
To start with, for any state $x\in\mathcal X'$, consider the following feasible set of policies:
$
\mathcal F^{\text{SW}}(d_0,x)=\big\{\pi\in\Delta:\sum_{a\in\mathcal A}\sum_{x'\in\mathcal X'}\pi(a|x)P(x'|x,a)d(x')\leq \frac{d_0}{\overline{\mathrm{T}}}\big\},
$
where $\overline{\mathrm{T}}$ is the upper-bound of the MDP stopping time. Based on \eqref{eq:statewise_constraint} in Appendix \ref{sec:existing_approaches_details}, one deduces that every policy $\pi$ in $\bigcap_{x\in\mathcal X'}\mathcal F^{\text{SW}}(d_0,x)$ is a feasible policy w.r.t. problem $\mathcal{OPT}$. Motivated by this observation, a solution policy can be solved by 
$\min_{\pi \in \bigcap_{x\in\mathcal X'}\mathcal F^{\text{SW}}(d_0,x)} \,\mathbb E\left[\sum_{t=0}^{\mathrm{T}^*-1}c(x_t, a_t)\mid x_0,\pi\right]$.
One benefit of studying this surrogate problem is that its solution satisfies the Bellman optimality condition w.r.t. the step-wise Bellman operator as $T^{\text{SW}}[V](x)=\min_{\pi\in\mathcal F^{\text{SW}}(d_0,x)}\sum_{a}\pi(a|x)\big[c(x,a)+\sum_{x'\in\mathcal X'}P(x'|x,a)V(x')\big]$ for any $x\in\mathcal X'$.
In particular $T^{\text{SW}}$ is a contraction operator, which implies that there exists a unique solution $V^{*,\text{SW}}$ to fixed point equation $T^{\text{SW}}[V](x)=V(x)$ for $x\in\mathcal X'$ such that  $V^{*,\text{SW}}(x_0)$ is a solution to the surrogate problem. Therefore this problem can be solved by standard DP methods such as value iteration or policy iteration. Furthermore, based on the structure of $\mathcal F^{\text{SW}}(d_0,x)$, any surrogate policy is feasible w.r.t. problem $\mathcal{OPT}$. 
However, the major drawback is that the step-wise constraint in $\mathcal F^{\text{SW}}(d_0,x)$ can be much more stringent than the original safety constraint in problem $\mathcal{OPT}$.

\paragraph{Super-martingale Constraint Surrogate Approach:}
This surrogate algorithm is originally proposed by \cite{gabor1998multi}, where the CMDP constraint is reformulated as the surrogate value function $\mathcal{DS}_{\pi}(x)=\max\left\{d_0,\mathcal D_\pi(x)\right\}$ at initial state $x\in\mathcal X'$. It has been shown that an arbitrary policy $\pi$ is a feasible policy of the CMDP if and only if $\mathcal{DS}_{\pi}(x_0)=d_0$. Notice that $\mathcal{DS}_{\pi}$ is known as a \emph{super-martingale surrogate}, due to the inequality $\mathcal{DS}_{\pi}(x)\leq T^{DS}_\pi[\mathcal{DS}_{\pi}](x)$ with respect to the contraction Bellman operator $T^{DS}_\pi[V](x)=\sum_{a\in\mathcal A}\pi(a|x)\max\left\{d_0,d(x)+\sum_{x'\in\mathcal X'}P(x'|x,a)V(x')\right\}$ of the constraint value function.
However, for arbitrary policy $\pi$, in general it is non-trivial to compute the value function $\mathcal{DS}_{\pi}(x)$, and instead one can easily compute its upper-bound value function $\overline{\mathcal{DS}}_{\pi}(x)$ which is the solution of the fixed-point equation
$V(x)= T^{DS}_\pi[V](x), \,\,\,\forall x\in\mathcal X'$, using standard dynamic programming techniques. 
To better understand how this surrogate value function guarantees feasibility in problem $\mathcal{OPT}$, at each state $x\in\mathcal X'$  consider the optimal value function of the minimization problem $\overline{\mathcal{DS}}(x)=\min_{\pi\in\Delta}  \overline{\mathcal{DS}}_{\pi}(x)$.
Then whenever $\overline{\mathcal{DS}}(x_0)\leq d_0$, the corresponding solution policy $\pi$ is a feasible policy of problem $\mathcal{OPT}$, i.e., $\mathcal D_\pi(x_0)\leq\overline{\mathcal{DS}}_{\pi}(x_0)=d_0$.
Now define $\mathcal F^{\text{DS}}(x):=\!\big\{\pi\in\Delta:T^{DS}_\pi[\overline{\mathcal{DS}}](x)=\! \overline{\mathcal{DS}}(x)\!\big\}$ as the set of refined feasible policies induced by $\overline{\mathcal{DS}}$. If the condition $\overline{\mathcal{DS}}(x_0)\leq d_0$ holds, then all the policies in $\mathcal F^{\text{DS}}(x)$ are feasible w.r.t. problem $\mathcal{OPT}$.
Utilizing this observation, a surrogate solution policy of problem $\mathcal{OPT}$ can be found by computing the solution policy of the fixed-point  equation $T_{\mathcal F^{\text{DS}}}[V](x)=V(x)$, for $x\in\mathcal X'$, where $T_{\mathcal F^{\text{DS}}}[V](x)=\min_{\pi\in\mathcal F^{\text{DS}}(d_0,x)}T_{\pi,c}[V](x)$. Notice that $T_{\mathcal F^{\text{DS}}}$ is a contraction operator, this procedure can also be solved using standard DP methods. 
The major benefit of this 2-step approach is that the computation of the feasibility set is decoupled from solving the optimization problem.  This allows us to apply approaches such as the \emph{lexicographical ordering} method from multi-objective stochastic optimal control methods \citep{roijers2013survey} to solve the CMDP, for which the constraint value function has a higher lexicographical order than the objective value function. 
However, since the refined set of feasible policies is constructed prior to policy optimization, it might still be overly conservative. Furthermore, even if there exists a non-trivial solution policy to the surrogate problem, characterizing its sub-optimality performance bound remains a challenging task.

\subsection{Details of Existing Solution Algorithms}\label{sec:existing_approaches_details}
In this section, we provide the details of the existing algorithms for solving CMDPs.

\paragraph{The Lagrangian Based Algorithm:}
The standard way of solving problem $\mathcal{OPT}$ is by applying the Lagrangian method. To start with, consider the following minimax problem:
\[
\min_{\pi\in\Delta}\max_{\lambda\geq 0} \,\,\mathcal C_\pi(x_0) + \lambda (\mathcal D_\pi(x_0) - d_0),
\]
where $\lambda$ is the Lagrange multiplier of the CMDP constraint, and the Lagrangian function is given by
\[
\mathcal L_{x_0,d_0}(\pi,\lambda)=\mathcal C_\pi(x_0) + \lambda (\mathcal D_\pi(x_0) - d_0)=\mathbb E\left[\sum_{t=0}^{\mathrm{T}^*-1}c(x_t,a_t)+\lambda d(x_t)\mid\pi,x_0\right]-\lambda d_0.
\]
A solution pair $(\pi^*,\lambda^*)$ is considered as a saddle point of Lagrangian function $\mathcal L_{x_0,d_0}(\pi,\lambda)$ if the following condition holds:
\[
\mathcal L_{x_0,d_0}(\pi^*,\lambda)\leq L_{x_0,d_0}(\pi,\lambda)\leq \mathcal L_{x_0,d_0}(\pi,\lambda^*),\,\,\forall \pi, \lambda\geq 0.
\]
According to Theorem 9.10 in \cite{altman1999constrained}, suppose the interior set of feasible set of problem $\mathcal{OPT}$ is non-empty, then there exists a solution pair $(\pi^*,\lambda^*)$ to the minimax problem that is a saddle point of Lagrangian function $\mathcal L_{x_0,d_0}(\pi,\lambda)$. 
Furthermore, Theorem 9.9 in \cite{altman1999constrained} shows that strong duality holds:
\[
\min_{\pi\in\Delta}\max_{\lambda\geq 0}\,\,\mathcal L_{x_0,d_0}(\pi,\lambda)=\max_{\lambda\geq 0}\min_{\pi\in\Delta}\,\,\mathcal L_{x_0,d_0}(\pi,\lambda).
\]
This implies that the optimal policy $\pi^*\in\Delta$ can be calculated by solving the following Lagrangian function
$
\pi^*\in\arg\min_{\pi\in\Delta} \mathcal L_{x_0,d_0}(\pi,\lambda^*),
$
with optimal Lagrange multiplier $\lambda^*$. 

Utilizing the structure of the Lagrangian function $\mathcal L_{x_0,d_0}(\pi,\lambda)$, for any fixed Lagrange multiplier $\lambda\geq 0$, consider the $\lambda-$Bellman operator $T_\lambda[V]$, where 
\[
T_\lambda[V](x)=\left\{\begin{array}{cl}
\min_{\pi\in\Delta(x)}\,T_{\pi,c+\lambda d}[V](x) & \text{if $x\in\mathcal X'$,}\\
0 & \text{otherwise}.
 \end{array}\right.
 \]
Since $T_\lambda[V]$ is a $\gamma-$contraction operator, by the Bellman principle of optimality, there is a unique solution $V^*$ to the fixed point equation $T_\lambda[V](x)=V(x)$ for $x\in\mathcal X$, which can be solved by dynamic programming algorithms, such as value iteration or policy iteration. Furthermore, the $\lambda-$optimal policy $\pi^*_\lambda$ has the form of
\[
\pi^*_\lambda(\cdot|x)\in\arg\min_{\pi\in\Delta(x)}T_{\pi,c+\lambda d}[V^*](x) \quad \text{if $x\in\mathcal X'$, }
\]
and $\pi^*_\lambda(\cdot|x)$ is an arbitrary probability distribution function if $x\not\in\mathcal X'$.

\paragraph{The Dual LP Based Algorithm:}
The other commonly-used method for solving problem $\mathcal{OPT}$ is based on computing its \emph{occupation measures} w.r.t. the optimal policy. In a transient MDP, for any given policy $\pi$ and initial state $x_0\in\mathcal X'$ the state-action occupation measure is defined as
\[
\rho_{\pi,x_0}(x,a)=\mathbb E\left[\sum_{t=0}^{\mathrm{T}^*-1}\mathbf 1\{x_t=x,a_t=a\}\mid x_0,\pi\right],\,\,\forall x\in\mathcal X, \,\,\forall a\in\mathcal A,
\]
and the occupation measure at state $x\in\X$ is defined as $\rho_{\pi,x_0}(x)=\sum_{a\in\mathcal A}\rho_{\pi,x_0}(x,a)$. Clearly these two occupation measures are related by the following property: $\rho_{\pi,x_0}(x,a)=\rho_{\pi,x_0}(x)\cdot \pi(a|x)$.
Furthermore, using the fact that a occupation measure $\rho_{\pi,x_0}(x,a)$ is indeed the sum of visiting distribution of the transient MDP induced by policy $\pi$, one clearly sees that it satisfies the following set of constraints:
\[
\begin{split}
\mathcal Q(x_0)\!=\!\bigg\{\!\rho:\mathcal X'\times\mathcal A\rightarrow\mathbb R:&\rho(x,a)\geq 0,\forall x\in\mathcal X',a\in\mathcal A,\\
&\!\!\!\sum_{x_p\in\mathcal X',a\in\mathcal A}\!\!\rho(x_p,a)(\mathbf 1\{x_p=x\}\!-\!P(x|x_p,a))\!=\!\mathbf 1\{x=x_0\}
\bigg\}.
\end{split}
\]
Therefore, by Theorem 9.13 in \cite{altman1999constrained}, equivalently problem $\mathcal{OPT}$ can be solved by the LP optimization problem with $2|\mathcal X'||\mathcal A| + 1$ constraints:
\begin{alignat}{2}
\min_{\rho\in\mathcal Q(x_0)} & & \quad&\mathcal \sum_{x\in\mathcal X',a\in\mathcal A}\rho(x,a) c(x,a) \label{eq:dual_LP_1}\\ 
\text{subject to} & & \quad&\sum_{x\in\mathcal X',a\in\mathcal A}\rho(x,a) d(x)\leq d_0,\label{eq:dual_LP_2}
\end{alignat}
and equipped with the minimizer minimizer $\rho^*\in \mathcal Q(x_0)$, the (non-uniform) optimal Markovian stationary policy is given by the following form:
\[
\pi^*(a|x)=\frac{\rho^*(x,a)}{\sum_{a\in\mathcal A}\rho^*(x,a)},\,\,\forall x\in\mathcal X', \,\,\forall a\in\mathcal A.
\]

\paragraph{Step-wise Constraint Surrogate Approach:}
To start with, without loss of generality assume that the agent is safe at the initial phase, i.e., 
$d(x_0)\leq 0$.
For any state $x\in\mathcal X'$, consider the following feasible set of policies:
\[
\mathcal F^{\text{SW}}(d_0,x)=\left\{\pi\in\Delta(x):\sum_{a\in\mathcal A}\sum_{x'\in\mathcal X'}\pi(a|x)P(x'|x,a)d(x')\leq \frac{d_0}{\overline{\mathrm{T}}}\right\},
\]
where $\overline{\mathrm{T}}$ is the uniform upper-bound of the random stopping time in the transient MDP. 
Immediately, for any policy $\pi\in\bigcap_{x\in\mathcal X'}\mathcal F^{\text{SW}}(d_0,x)$, one has the following inequality: 
\begin{equation}\label{eq:statewise_constraint}
\begin{split}
\mathbb E\left[\sum_{t=0}^{\mathrm{T}^*-1}d(x_t)\mid\pi,x_0\right]=&d(x_0)+\sum_{x\in\mathcal X',a\in\mathcal A}\rho_{\pi,x_0}(x,a)\sum_{x'\in\mathcal X'}P(x'|x,a)d(x')\\
\leq&\mathbb E\left[\sum_{t=0}^{\mathrm{T}^*-1}\frac{d_0}{\overline{\mathrm{T}}}\mid\pi,x_0\right]\leq d_0,
\end{split}
\end{equation}
where $\rho_{\pi,x_0}(x,a)=\mathbb E\left[\sum_{t=0}^{\mathrm{T}^*-1}\mathbf 1\{x_t=x\}|\pi,x_0\right]$ is the state-action occupation measure with initial state $x_0$ and policy $\pi$, which implies that the policy $\pi$ is safe, i.e., $\mathcal D_\pi(x_0)\leq d_0$. 
Equipped with this property, we propose the following surrogate problem for problem $\mathcal{OPT}$, whose solution (if exists) is guaranteed to be safe:
\begin{quote} {\bf Problem $\mathcal{OPT}^{\text{SW}}$:} Given an initial state $x_0$, a threshold $d_0$, solve
$
\min_{\pi \in \bigcap_{x\in\mathcal X'}\mathcal F^{\text{SW}}(d_0,x)} \,\mathbb E\left[\sum_{t=0}^{\mathrm{T}^*-1}c(x_t, a_t)\mid x_0,\pi\right].
$
\end{quote}
To solve problem $\mathcal{OPT}^{\text{SW}}$, for each state $x\in\mathcal X'$ define the step-wise Bellman operator as 
\[
T^{\text{SW}}[V](x)=\min_{\pi\in\mathcal F^{\text{SW}}(d_0,x)}T_{\pi,c}[V](x).
\]
Based on the standard arguments in \cite{bertsekas1995dynamic}, the Bellman operator $T^{\text{SW}}$ is a contraction operator, which implies that there exists a unique solution $V^{*,\text{SW}}$ to the fixed-point equation $T^{\text{SW}}[V](x)=V(x)$, for any $x\in\mathcal X'$, such that $V^{*,\text{SW}}(x_0)$ is a solution to problem $\mathcal{OPT}^{\text{SW}}$. 

\paragraph{Super-martingale Constraint Surrogate Approach:}
The following surrogate algorithm is proposed by \cite{gabor1998multi}. Before going to the main algorithm, first consider the following surrogate constraint value function w.r.t. policy $\pi\in\Delta$ and state $x\in\mathcal X'$:
\[
\begin{split}
\mathcal{DS}_{\pi}(x)=&\max\left\{d_0,\mathcal D_\pi(x)\right\}=\max\left\{\!d_0,d(x)+\!\!\sum_{a\in\mathcal A}\pi(a|x)\!\sum_{x'\in\mathcal X'}\!\!P(x'|x,a)\mathcal D_\pi(x')\!\right\}\\
\leq&\sum_{a\in\mathcal A}\!\pi(a|x)\!\max\left\{\!d_0,d(x)+\!\!\sum_{x'\in\mathcal X'}P(x'|x,a)\mathcal D_\pi(x')\!\right\}.
\end{split}
\]
The last inequality is due to the fact that the $\max$ operator is convex.
Clearly, by definition one has 
\[
\mathcal{DS}_{\pi}(x)\geq \mathcal D_\pi(x) \,\,\text{for each $x\in\mathcal X'$.}
\]
On the other hand, one also has the following property: $\mathcal{DS}_{\pi}(x_0)=d_0$ if and only if the constraint of problem $\mathcal{OPT}$ is satisfied, i.e., $\mathcal D_\pi(x_0)\leq d_0$. Now, by utilizing the contraction operator
\[
T^{DS}_\pi[V](x)=\sum_{a\in\mathcal A}\pi(a|x)\max\left\{d_0,d(x)+\sum_{x'\in\mathcal X'}P(x'|x,a)V(x')\right\},
\]
w.r.t. policy $\pi$, and by utilizing the definition of the constraint value function $\mathcal{DS}_{\pi}$, one immediately has the chain of inequalities:
\[
\mathcal{DS}_{\pi}(x)\leq T^{DS}_\pi[\mathcal D_\pi](x)
\leq T^{DS}_\pi[\mathcal{DS}_{\pi}](x),
\]
which implies that the value function $\{\mathcal{DS}_{\pi}(x)\}_{x\in\mathcal X'}$ is a \emph{super-martingale}, i.e.,
$\mathcal{DS}_{\pi}(x)\leq T^{DS}_\pi[\mathcal{DS}_{\pi}](x)$, for $x\in\mathcal X'$.

However in general the constraint value function of interest, i.e., $\mathcal{DS}_{\pi}$, cannot be directly obtained as the solution of fixed point equation. Thus in what follows, we will work with its approximation $\overline{\mathcal{DS}}_{\pi}(x)$, which is the fixed-point solution of 
\[
V(x)= T^{DS}_\pi[V](x), \,\,\,\forall x\in\mathcal X'.
\]
By definition, the following properties always hold: 
\[
\begin{split}
&\overline{\mathcal{DS}}_{\pi}(x)\geq d_0,\,\,\overline{\mathcal{DS}}_{\pi}(x)\geq\mathcal{DS}_{\pi}(x)\geq \mathcal D_\pi(x),\,\text{ for any $x\in \mathcal X'$.}
\end{split}
\]

To understand how this surrogate value function guarantees feasibility in problem $\mathcal{OPT}$,
consider the optimal value function 
\[
\overline{\mathcal{DS}}(x)=\min_{\pi\in\Delta(x)}  \overline{\mathcal{DS}}_{\pi}(x),\,\,\forall x\in\mathcal X',
\]
which is also the unique solution w.r.t. the fixed-point equation $\min_{\pi\in\Delta(x)}T^{DS}_\pi[V](x)=V(x)$. Now suppose at state $x_0$, the following condition holds: $\overline{\mathcal{DS}}(x_0)\leq d_0$. Then there exists a policy $\pi$ that is is safe w.r.t. problem $\mathcal{OPT}$, i.e., $\mathcal D_\pi(x_0)\leq\overline{\mathcal{DS}}_{\pi}(x_0)=d_0$.

Motivated by the above observation, we first check if the following condition holds: 
\[
\overline{\mathcal{DS}}(x_0)\leq d_0.
\]
If that is the case, define the set of feasible policies that is induced by the super-martingale $\overline{\mathcal{DS}}$ as
\[
\mathcal F^{\text{DS}}(x):=\!\big\{\pi\in\Delta:T^{DS}_\pi[\overline{\mathcal{DS}}](x)=\! \overline{\mathcal{DS}}(x)\!\big\},
\]
and solve the following problem, whose solution (if exists) is guaranteed to be safe.
\begin{quote} {\bf Problem $\mathcal{OPT}^{\text{DS}}$:} Assume $\overline{\mathcal{DS}}(x_0)\leq d_0$. Then given an initial state $x_0\in \mathcal X'$, and a threshold $d_0 \in \mathbb{R}_{\geq 0}$, solve
$\min_{\pi \in \bigcup_{x\in\mathcal X'}\mathcal F^{\text{DS}}(x)}\mathbb E\left[\sum_{t=0}^{\mathrm{T}^*-1}c(x_t, a_t)\mid x_0,\pi\right]$. 
\end{quote}
Similar to the step-wise approach, clearly the $\overline{\mathcal{DS}}$-induced Bellman operator 
\[
T_{\mathcal F^{\text{DS}}}[V](x)=\min_{\pi\in\mathcal F^{\text{DS}}(d_0,x)}T_{\pi,c}[V](x)
\]
is a contraction operator. This implies that there exists a unique solution $V^*_{\mathcal F^{\text{DS}}}$ to the fixed-point equation $T_{\mathcal F^{\text{DS}}}[V](x)=V(x)$, for $x\in\mathcal X'$, such that $V^*_{\mathcal F^{\text{DS}}}(x_0)$ is a solution to problem $\mathcal{OPT}^{\text{DS}}$. 

\newpage
\section{Proofs of the Technical Results in Section \ref{sec:lyapunov_cmdp}}

\subsection{Proof of Lemma \ref{lem:existence_L}}\label{appendix:lem:existence_L}
In the following part of the analysis we will use shorthand notation $P^*$ to denote the transition probability for 
$x\in\mathcal X'$ induced by the optimal policy, and $P_{B}$ to denote the transition probability for 
$x\in\mathcal X'$ induced by the baseline policy. These matrices are sub-stochastic  because we exclude the terms in the recurrent states. This means that both spectral radii $\rho(P^*)$ and $\rho(P_{B})$ are less than $1$, and thus both $(I-P^*)$ and $(I-P_{B})$ are invertible. By the Newmann series expansion, one can also show that
\[
(I-P^*)^{-1}=\left\{\sum_{t=0}^{\mathrm{T}^*-1}\mathbb P(x_t=x'|x_0=x,\pi^*)\right\}_{x,x'\in\mathcal X'},
\]
and
\[
(I-P_{B})^{-1}=\left\{\sum_{t=0}^{\mathrm{T}^*-1}\mathbb P(x_t=x'|x_0=x,\pi_B)\right\}_{x,x'\in\mathcal X'}.
\]
We also define $\Delta(a|x)=\pi_B(a|x)-\pi^*(a|x)$ for any $x\in\mathcal X'$ and $a\in\mathcal A$, and $P_\Delta=\{\sum_{a\in\mathcal A}P(x'|x,a)\Delta(a|x)\}_{x,x'\in\mathcal X'}$. Therefore, one can easily see that
\[
(I-P^*)(I-P_{B}+P_\Delta)^{-1}=I_{|\mathcal X'|\times|\mathcal X'|}.
\]
Therefore, by the Woodbury Sherman Morrison identity, we have that
\begin{equation}\label{eq:intermediate}
(I-P^*)^{-1}=(I-P_{B})^{-1}(I_{|\mathcal X'|\times|\mathcal X'|}+P_\Delta (I-P^*)^{-1}).
\end{equation}
By multiplying the constraint cost function vector $d(x)$ on both sides of the above equality, This further implies that for each $x\in\mathcal X'$, one has
\begin{equation}\label{eq:construction_lyap_opt}
\mathcal D_{\pi^*}(x)=\mathbb E\left[\sum_{t=0}^{\mathrm{T}^*-1}d(x_t)+\epsilon(x_t)\mid \pi_B,x\right]=L_\epsilon(x),
\end{equation}
such that
\[
T_{\pi^*}[L_\epsilon](x)=L_\epsilon(x),\,\,\forall x\in\mathcal X'.
\]
Here, the auxiliary constraint cost is given by
\begin{equation}\label{eq:eps_exact}
\epsilon(x)=\sum_{a\in\mathcal A}\Delta(a|x)\sum_{x'\in\mathcal X'}P(x'|x,a)\mathcal D_{\pi^*}(x').
\end{equation}
By construction, equation \eqref{eq:intermediate} immediately implies that $L_\epsilon$ is a fixed point solution of $T_{\pi^*}[V](x)=V(x)$ for $x\in\mathcal X'$. Furthermore, equation \eqref{eq:eps_exact} further implies that the upper bound of the constraint cost $\epsilon$ is given by:
\[
-2\overline{\mathrm{T}} D_{\max} D_{TV}(\pi^*||\pi_B)(x)\leq \epsilon(x)\leq 2\overline{\mathrm{T}} D_{\max} D_{TV}(\pi^*||\pi_B)(x),\,\,x\in\mathcal X',
\]
where $\overline{\mathrm{T}}$ is the uniform upper-bound of the MDP stopping time.

Since $\pi^*$ is also a feasible policy of problem $\mathcal{OPT}$, this further implies that $L_{\epsilon}(x_0)\leq d_0$.

\subsection{Proof of Theorem \ref{thm:L_star}}\label{appendix:thm:L_star}
First, under Assumption \ref{assumption:pi_b} the following inequality holds:
\[
D_{TV}(\pi^*||\pi_B)(x)\leq \frac{d_0-\mathcal D^{\pi_B}(x_0)}{2\overline{\mathrm{T}}^2 D_{\max}},\,\,\forall x\in\mathcal X'.
\]
Recall that $\epsilon^*(x)=2\overline{\mathrm{T}} D_{\max}D_{TV}(\pi^*||\pi_B)(x)$. The above expression implies that
\[
\mathbb E\left[\sum_{t=0}^{\mathrm{T}^*-1}\epsilon^*(x_t)|\pi_B,x_0\right]
\leq 2\overline{\mathrm{T}}^2 D_{\max}\max_{x\in\mathcal X'}D_{TV}(\pi^*||\pi_B)(x)\leq d_0-\mathcal D^{\pi_B}(x_0),
\]
which further implies 
\[
L_{\epsilon^*}(x_0)=\mathbb E\left[\sum_{t=0}^{\mathrm{T}^*-1}\epsilon^*(x_t)|\pi_B,x_0\right]+\mathcal D^{\pi_B}(x_0)\leq d_0,
\]
i.e., the second property in \eqref{eq:opt_lyap} holds.

Second, recall the following equality from \eqref{eq:construction_lyap_opt}:
\[
(I-P^*)(I-P_{B})^{-1}(\{d(x)\}_{x\in\mathcal X'}+\{\epsilon(x)\}_{x\in\mathcal X'})=\{d(x)\}_{x\in\mathcal X'}
\]
with the definition of the auxiliary constraint cost $\epsilon$ given by \eqref{eq:eps_exact}. We want to show that the first condition in \eqref{eq:opt_lyap} holds.
By adding the term $(I-P^*)(I-P_{B})^{-1}\epsilon^*$ to both sides of the above equality, it implies that:
\[
(I-P^*)(I-P_{B})^{-1}(D+\epsilon^*)=D+(I-P^*)(I-P_{B})^{-1}(\epsilon^*-\epsilon),
\]
where $\epsilon^*(x)-\epsilon(x)\geq 0$, for $x\in\mathcal X'$.
Therefore, the proof is completed if we can show that for any $x\in\mathcal X'$:
\begin{equation}\label{ineq:target_proof}
\{(I-P^*)(I-P_{B})^{-1}(\epsilon^*-\epsilon)\}(x)\geq 0.
\end{equation}
Now consider the following inequalities derived from Assumption \ref{assumption:pi_b}:
\[
\max_x D_{TV}(\pi^*||\pi_B)(x)\leq\frac{1}{2\overline{\mathrm{T}}}\frac{\overline{\mathrm{T}}D_{\max}-\overline{\mathcal D}}{\overline{\mathrm{T}}D_{\max}+\overline{\mathcal D}}\leq \frac{1}{2\overline{\mathrm{T}}}\frac{\overline{\mathrm{T}}D_{\max}-\overline{\mathcal D}^*}{\overline{\mathrm{T}}D_{\max}+\overline{\mathcal D}^*},
\]
the last inequality is due to the fact that $\overline{\mathcal D}\geq \overline{\mathcal D}^*$, where $\overline{\mathcal D}^*=\max_{x}\mathcal D_{\pi^*}(x)$ is the constraint upper-bound w.r.t. optimal policy $\pi^*$.
Multiplying the ratio $D_{TV}(\pi^*||\pi_B)(x)/\max_x D_{TV}(\pi^*||\pi_B)(x)\geq 0$ on both sides of the above inequality, for each $x\in\mathcal X'$ one obtains the following inequality:
\[
\begin{split}
D_{TV}(\pi^*||\pi_B)(x)\leq& \frac{(\overline{\mathrm{T}}D_{\max}-\overline{\mathcal D}^*)D_{TV}(\pi^*||\pi_B)(x)}{2 \overline{\mathrm{T}}(\overline{\mathrm{T}}D_{\max}+\overline{\mathcal D}^*)\max_x D_{TV}(\pi^*||\pi_B)(x)}\\
\leq& \frac{\epsilon^*(x)-\epsilon(x)}{2 \overline{\mathrm{T}}\max_{x}\{\epsilon^*(x)-\epsilon(x)\}},
\end{split}
\]
the last inequality holds due to the fact that for any $x\in\mathcal X'$,
\[
2 (\overline{\mathrm{T}}D_{\max}-\overline{\mathcal D}^*) D_{TV}(\pi^*||\pi_B)(x)
\leq\epsilon^*(x)-\epsilon(x)\leq 2 (\overline{\mathrm{T}}D_{\max}+\overline{\mathcal D}^*) D_{TV}(\pi^*||\pi_B)(x).
\]
Multiplying $2\overline{\mathrm{T}}\max_{x}\{\epsilon^*(x)-\epsilon(x)\}$ on both sides, it further implies that for each $x\in\mathcal X'$, one has the following inequality:
\[
(\epsilon^*(x)-\epsilon(x))-2\overline{\mathrm{T}}\max_{x'}\{\epsilon^*(x')-\epsilon(x')\} D_{TV}(\pi^*||\pi_B)(x)\geq 0.
\]
Now recall that $P^*=P_{B}-P_\Delta$, where $\Delta$ is equal to the matrix that characterizes the difference between the baseline and the optimal policy for each state in $\mathcal X'$ and action in $\mathcal A$, i.e., $\Delta(a|x)=\pi_B(a|x)-\pi^*(a|x)$, $\forall x\in\mathcal X'$, $\forall a\in\mathcal A$ and $P_\Delta=\{\sum_{a\in\mathcal A}P(x'|x,a)\Delta(a|x)\}_{x,x'\in\mathcal X'}$, the above condition guarantees that
\[
\{(I-P_{B}+P_\Delta)(I-P_{B})^{-1}(\epsilon^*-\epsilon)\}(x)\geq 0,\,\,\forall x\in\mathcal X'.
\]
This finally comes to the conclusion that under Condition $1$, the inequality in \eqref{ineq:target_proof} holds, which further implies that 
\[
(I-P^*)(I-P_{B})^{-1}(d(x)+\epsilon^*(x))\geq d(x),\,\,\forall x\in\mathcal X,
\]
i.e., the first property in \eqref{eq:opt_lyap} holds with 
$L_{\epsilon^*}(x)=\mathbb E\left[\sum_{t=0}^{\mathrm{T}^*-1}d(x)+\epsilon^*(x)|\pi_B,x\right]$.

By combining the above results, one shows that $L_{\epsilon^*}$ is a Lyapunov function that satisfies the properties in \eqref{eq:opt_lyap}  and \eqref{eq:baseline_lyap}, which concludes the proof.

\subsection{Properties of Safe Bellman Operator}\label{sec:appendix:safety_aware_bellman}
\begin{proposition}
The safe Bellman operator has the following properties.
\begin{itemize}
\item \textbf{Contraction}: There exists a vector with positive components, i.e., $\rho:\mathcal X\rightarrow\mathbb R_{\geq 0}$, and a discounting factor $0<\gamma<1$ such that 
\[
\|T[V]-T[W]\|_\rho\leq \gamma\|V-W\|_\rho,
\]
where the weighted norm is defined as $\|V\|_\rho=\max_{x\in\mathcal X} \frac{V(x)}{\rho(x)}$.
\item \textbf{Monotonicity}: For any value functions $V,\,W:\mathcal X\rightarrow\mathbb R$ such that $V(x)\leq W(x)$, one has the following inequality: $T[V](x)\leq T[W](x)$, for any state $x\in\mathcal X$.
\end{itemize}
\end{proposition}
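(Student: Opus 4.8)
The plan is to dispatch \emph{monotonicity} first, since it requires nothing beyond the nonnegativity of the kernel and the ``min of monotone maps is monotone'' trick. For any fixed $\pi$, the coefficients $\pi(a|x)\ge 0$ and $P(x'|x,a)\ge 0$ make $T_{\pi,c}$ monotone, so $V\le W$ pointwise gives $T_{\pi,c}[V](x)\le T_{\pi,c}[W](x)$. To pass to the min over $\mathcal F_{L_{\epsilon^*}}(x)$, I would let $\pi_W$ attain the minimum defining $T[W](x)$ --- the minimum is attained because $\mathcal F_{L_{\epsilon^*}}(x)$ is a nonempty (it contains $\pi_B$ by the first property in \eqref{eq:baseline_lyap}) compact polytope --- and chain $T[V](x)\le T_{\pi_W,c}[V](x)\le T_{\pi_W,c}[W](x)=T[W](x)$. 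On terminal states both sides are $0$, so monotonicity is immediate there.

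For the \emph{contraction} I would first reduce the claim to the existence of a strictly positive weight $\rho$ on $\mathcal X'$ and a constant $\gamma\in(0,1)$ satisfying the one-step drift condition $\sum_{x'\in\mathcal X'}P(x'|x,a)\rho(x')\le\gamma\,\rho(x)$ for every $x\in\mathcal X'$ and $a\in\mathcal A$. Granting this, the estimate follows from the elementary bound $|\min_\pi f(\pi)-\min_\pi g(\pi)|\le\max_\pi|f(\pi)-g(\pi)|$ applied at each state, together with $|T_{\pi,c}[V](x)-T_{\pi,c}[W](x)|\le\sum_a\pi(a|x)\sum_{x'}P(x'|x,a)|V(x')-W(x')|$ (the cost terms cancel) and $|V(x')-W(x')|\le\rho(x')\,\|V-W\|_\rho$. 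Combining these gives the bound $\|V-W\|_\rho\sum_a\pi(a|x)\sum_{x'}P(x'|x,a)\rho(x')$, and the drift condition caps the trailing sum by $\gamma\rho(x)$; dividing by $\rho(x)$ and maximizing over $x$ yields $\|T[V]-T[W]\|_\rho\le\gamma\|V-W\|_\rho$. Since the drift condition holds for every action, restricting to $\pi\in\mathcal F_{L_{\epsilon^*}}(x)$ is harmless, and because the Bellman operator sums only over $\mathcal X'$ the value of $\rho$ at $x_{\text{Term}}$ can be fixed to any positive number.

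The heart of the argument is constructing $\rho$, and here I would exploit the transient assumption that the expected first-hitting time is uniformly bounded. I would set $\rho(x)=\max_{\pi}\mathbb E[\mathrm{T}^*\mid x,\pi]$ for $x\in\mathcal X'$, the maximal expected time-to-absorption. Since every policy is proper with $\mathbb E[\mathrm{T}^*\mid x,\pi]\le\overline{\mathrm{T}}$, this is finite and satisfies $1\le\rho(x)\le\overline{\mathrm{T}}$ (at least one step is taken from a transient state). Viewing $\rho$ as the optimal value of the MDP with unit per-step cost, it obeys the Bellman optimality equation $\rho(x)=\max_a\{1+\sum_{x'\in\mathcal X'}P(x'|x,a)\rho(x')\}$, so for every $a$ one gets $\sum_{x'}P(x'|x,a)\rho(x')\le\rho(x)-1=\rho(x)\,(1-1/\rho(x))\le\rho(x)\,(1-1/\overline{\mathrm{T}})$. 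This establishes the drift condition with $\gamma=1-1/\overline{\mathrm{T}}\in(0,1)$ and $\rho>0$, completing the proof.

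I expect this last construction --- verifying that the maximal hitting-time function is strictly positive, bounded by $\overline{\mathrm{T}}$, and solves the shortest-path-type Bellman equation --- to be the main (though standard) obstacle; once $\rho$ and $\gamma$ are in hand, the remaining steps are routine Bellman-operator bookkeeping.
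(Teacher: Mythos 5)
Your proof is correct and takes essentially the same route as the paper: the paper likewise constructs the weight $\rho$ as the negative of the fixed point of a stochastic-shortest-path problem with all per-step costs equal to $-1$ (i.e., a maximal expected time-to-absorption), extracts the drift condition $\sum_a\pi(a|x)\sum_{x'}P(x'|x,a)\rho(x')\leq\rho(x)-1\leq\gamma\rho(x)$, and concludes the weighted sup-norm contraction, citing Bertsekas's propositions for the steps you instead verify directly. The only cosmetic differences are that the paper's $\rho$ maximizes hitting time over the feasible policy set $\mathcal F_{L_{\epsilon^*}}$ rather than over all policies (so its drift inequality is stated policy-wise, whereas yours holds action-wise and makes the restriction to $\mathcal F_{L_{\epsilon^*}}$ automatic), and it takes $\gamma=\max_{x\in\mathcal X'}(\rho(x)-1)/\rho(x)$ in place of your explicit $\gamma=1-1/\overline{\mathrm{T}}$.
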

\begin{proof}
First, we show the monotonicity property. For the case of $x\in\mathcal X\setminus\mathcal X'$, the property trivially holds. For the case of $x\in\mathcal X'$, given value functions $W,\,V:\mathcal X'\rightarrow \mathbb R$ such that $V(x)\leq W(x)$ for any $x\in\mathcal X'$, by the definition of Bellman operator $T$, one can show that for any $x\in\mathcal X'$ and any $a\in\mathcal A$,
\[
c(x,a)+\sum_{x'\in\mathcal X'}P(x'|x,a)V(x')
\leq
c(x,a)+\sum_{x'\in\mathcal X'}P(x'|x,a)W(x').
\]
Therefore, by multiplying $\pi(a|x)$ on both sides, summing the above expression over $a\in\mathcal A$, and taking the minimum of $\pi$ over the feasible set $\mathcal F_{L_{\epsilon^*}}(x)$, one can show that $T[V](x)\leq T[W](x)$ for any $x\in\mathcal X'$.

Second we show that the contraction property holds. For the case of $x\in\mathcal X\setminus\mathcal X'$, the property trivially holds. For the case of $x\in\mathcal X'$, following the construction in Proposition 3.3.1 of \cite{bertsekas1995dynamic}, consider a stochastic shortest path problem where the transition probabilities and the constraint cost function are the same as the one in problem $\mathcal{OPT}$, but the cost are all equal to $-1$. Then, there exists a fixed point value function $\hat V$, such that
\[
\hat V(x)=-1 + \min_{\pi\in \mathcal F_{L_{\epsilon^*}}(x)} \sum_{a}\pi(a|x)\sum_{x'\in\mathcal X'}P(x'|x,a)\hat V(x'),\,\,\forall x\in\mathcal X',
\]
such that the following inequality holds for given feasible Markovian policy $\pi'$:
\[
\hat V(x)
\leq -1 + \sum_{a}\pi'(a|x)\sum_{x'\in\mathcal X'}P(x'|x,a)\hat V(x'),\,\,\forall x\in\mathcal X'.
\]
Notice that $\hat V(x)\leq -1$ for all $x\in\mathcal X'$. By defining $\rho(x)=-\hat V(x)$, and by constructing
$
\gamma=\max_{x\in\mathcal X'} (\rho(x)-1)/\rho(x),
$
one immediately has $0<\gamma<1$, and
\[
\sum_{a}\pi'(a|x)\sum_{x'\in\mathcal X'}P(x'|x,a)\rho(x')\leq \rho(x) - 1\leq \gamma\rho(x),\,\,\forall x\in\mathcal X'.
\]
Then by using Proposition 1.5.2 of \cite{bertsekas1995dynamic}, one can show that $T$ is a contraction operator.
\end{proof}
\subsection{Proof of Theorem \ref{thm:safe_opt}}\label{appendix:thm:safe_opt}
Let $V_{\mathcal{OPT}}(x_0)$ be the optimal value function of problem $\mathcal{OPT}$, and let $V^*$ be a fixed point solution: $V(x)=T[V](x)$, for any $x\in\mathcal X$.
For the case when $x_0\in\mathcal X\setminus\mathcal X'$, the following result trivially holds: $V_{\mathcal{OPT}}(x_0)=T[V_{\mathcal{OPT}}](x_0)=V^*(x_0)=0$. Below, we show the equality for the case of $x_0\in\mathcal X'$.

First, we want to show that $V_{\mathcal{OPT}}(x_0)\leq V^*(x_0)$. Consider the greedy policy $\overline\pi^*$ constructed from the fixed point equation. Immediately, one has that $\overline\pi^*(\cdot|x)\in\mathcal F_{L_{\epsilon^*}}(x)$. This implies 
\begin{equation}\label{eq:lyap_ineq_opt}
T_{\overline\pi^*,d}[L_{\epsilon^*}](x)\!\leq\! L_{\epsilon^*}(x),\,\,\forall x\in\mathcal X'.
\end{equation}
Thus by recursively applying $T_{\overline\pi^*,d}$ on both sides of the above inequality, the contraction property of Bellman operator $T_{\overline\pi^*,d}$ implies that one has the following expression:
\[
\lim_{n\rightarrow\infty}T^n_{\overline\pi^*,d}[L_{\epsilon^*}](x_0)=\mathbb E\left[\sum_{t=0}^{\mathrm{T}^*-1} d(x_t)+L_{\epsilon^*}(x_{\mathrm{T}^*})\mid x_0,\overline\pi^*\right]\leq L_{\epsilon^*}(x_0)\leq d_0.
\]
Since the state enters the terminal set at time $t=\mathrm{T}^*$, we have that $L_{\epsilon^*}(x_{\mathrm{T}^*})=0$ almost surely. Then the above inequality implies
$
\mathbb E\left[\sum_{t=0}^{\mathrm{T}^*-1} d(x_t)\mid x_0,\overline\pi^*\right]\leq d_0,
$
which further shows that $\overline\pi^*$ is a feasible policy to problem $\mathcal{OPT}$. On the other hand, recall that $V^*(x)$ is a fixed point solution to $V(x)=T[V](x)$, for any $x\in\mathcal X'$. 
Then for any bounded initial value function $V_0$, the contraction property of Bellman operator $T_{\overline\pi^*,c}$ implies that 
\[
V^*(x)=\lim_{n\rightarrow\infty}T^n_{\overline\pi^*,c}[V_0](x)=\lim_{n\rightarrow\infty}\mathbb E\left[\sum_{t=0}^{n-1} c(x_t,a_t)+V_0(x_{n})\mid x_0=x,\overline\pi^*\right],
\]
for which the transient assumption of stopping MDPs further implies that
\[
V^*(x)=\mathbb E\left[\sum_{t=0}^{\mathrm{T}^*-1} c(x_t,a_t)\mid x_0=x,\overline\pi^*\right].
\]
Since $\overline\pi^*$ is a feasible solution to problem $\mathcal{OPT}$. This further implies that $V_{\mathcal{OPT}}(x_0)\leq V^*(x_0)$.

Second, we want to show that $V_{\mathcal{OPT}}(x_0)\geq V^*(x_0)$. Consider the optimal policy $\pi^*$ of problem $\mathcal{OPT}$ that is used to construct Lyapunov function $L_{\epsilon^*}$. Since the Lyapunov fnction satisfies the following Bellman inequality:
\[
T_{\pi^*,d}[L_{\epsilon^*}](x)\leq L_{\epsilon^*}(x),\,\,\forall x\in\mathcal X',
\]
it implies that the optimal policy $\pi^*$ is a feasible solution to the optimization problem in Bellman operator $T[V^*](x)$. Note that $V^*$ is a fixed point solution to equation: $V^*(x)=T[V^*](x)$, for any $x\in\mathcal X'$. Immediately the above result yields the following inequality:
\[
V^*(x)=T_{\overline\pi^*,c}[V^*](x)\leq T_{\pi^*,c}[V^*](x),\,\,\forall x\in\mathcal X',
\]
the first equality holds because $\overline\pi^*(\cdot|x)$ is the minimizer of the optimization problem in $T[V^*](x)$, $x\in\mathcal X'$.
By recursively applying  Bellman operator $T_{\pi^*,c}$ to this inequality, one has the following result:
\[
V^*(x)\leq \lim_{n\rightarrow\infty}T^n_{\pi^*,c}[V^*](x)=\mathcal C_{\pi^*}(x)=V_{\mathcal{OPT}}(x),\,\,\forall x\in\mathcal X'.
\]
One thus concludes that $V_{\mathcal{OPT}}(x_0)\geq V^*(x_0)$.

Combining the above analysis, we prove the claim of $V_{\mathcal{OPT}}(x_0)=V^*(x_0)$, and the greedy policy of the fixed-point equation, i.e., $\overline\pi^*$, is an optimal policy to problem $\mathcal{OPT}$.

\subsection{Proof of Proposition \ref{prop:properties_safe_PI}}\label{appendix:properties_safe_PI}
For the derivations of consistent feasibility and policy improvement, without loss of generality we only consider the case of $k=0$.

To show the property of consistent feasibility, consider an arbitrary feasible policy $\pi_0$ of problem $\mathcal {OPT}$. By definition, one has $\mathcal D_{\pi_0}(x_0)\leq d_0$, and the value function $\mathcal D_{\pi_0}$ has the following property:
\[
\sum_a\pi_0(a|x)\left[d(x)+\sum_{x'}P(x'|x,a)\mathcal D_{\pi_0}(x')\right]=\mathcal D_{\pi_0}(x),\,\,\forall x\in\mathcal X'.
\]
Immediately, since $\mathcal D_{\pi_0}$ satisfies the constraint in \eqref{eq:opt_eps_baseline}, one can treat it as a Lyapunov function, this shows that the set of Lyapunov functions $\mathcal L_{\pi_0}(x_0,d_0)$ is non-empty. 
Therefore, there exists a bounded Lyapunov function $\{L_{\epsilon_0}(x)\}_{x\in\mathcal X}$ as the solution of the optimization problem in Step 0. Now consider the policy optimization problem in Step 1. Based on the construction of $\{L_{\epsilon_0}(x)\}_{x\in\mathcal X}$, the current policy $\pi_0$ is a feasible solution to this problem, therefore the feasibility set is non-empty. Furthermore, by recursively applying the inequality constraint on the updated policy $\pi_1$ for $\mathrm{T}^*-1$ times, one has the following inequality: 
\[
\mathbb E\left[\sum_{t=0}^{\mathrm{T}^*-1}d(x_t)+L_{\epsilon_0}(x_{\mathrm{T}^*})|x_0,\pi_1\right]\leq L_{\epsilon_0}(x_0)\leq d_0.
\]
This shows that $\pi_1$ is a feasible policy to problem $\mathcal{OPT}$. 

To show the property of policy improvement, consider the policy optimization in Step 1. Notice that the current policy $\pi_0$ is a feasible solution of this problem (with Lyapunov function $L_0$), and the updated policy $\pi_1$ is a minimizer of this problem. Then, one immediately has the following chain of inequalities:
\[
\begin{split}
T_{\pi_1,c}[V_0](x)&=\sum_{a\in\mathcal A}\pi_1(a|x)\left[c(x,a)\!+\!\sum_{x'\in\mathcal X'}P(x'|x,a)V_0(x')\right]\\
&\leq\sum_{a\in\mathcal A}\pi_0(a|x)\left[c(x,a)\!+\!\sum_{x'\in\mathcal X'}P(x'|x,a)V_0(x')\right]= V_0(x),\,\,\forall x\in\mathcal X',
\end{split}
\]
where the last equality is due to the fact that $V_0(x)=\mathcal C_{\pi_0}(x)$, for any $x\in\mathcal X$.
By the contraction property of Bellman operator $T_{\pi_1}$, the above condition further implies 
\[
\mathcal C_{\pi_1}(x) = \lim_{n\rightarrow\infty}T^n_{\pi_1,c}[V_0](x)\leq V_0(x)=\mathcal C_{\pi_0}(x), \,\forall x\in\mathcal X',
\]
which proves the claim about policy improvement.

To show the property of asymptotic convergence, notice that the value function sequence $\{\mathcal C_{\pi_{k}}(\cdot)\}_{k\geq 0}$ is uniformly monotonic, and each element is uniformly lower bounded by the unique solution of fixed point equation: $V(x)=\min_{a\in\mathcal A} c(x,a)+\sum_{x'\in\mathcal X'}P(x'|x,a)V(x')$, $\forall x\in\mathcal X'$. Therefore, this sequence of value function converges (point-wise) as soon as in the limit the policy improvement stops. 
Whenever this happens, i.e., there exists $K\geq 0$ such that $\max_{x\in\mathcal X'}|\mathcal C_{\pi_{K+1}}(x)- \mathcal C_{\pi_K}(x)|\leq\epsilon$ for any $\epsilon>0$, then this value function is the fixed point of $\min_{\pi\in\mathcal F_{ L_K}(x)}T_{\pi,c}[V](x)=V(x)$, $\forall x\in\mathcal X'$, whose solution policy is unique (due to the strict convexity of the objective function in the policy optimization problem after adding a convex regularizer). Furthermore, due to the strict concavity of the objective function in problem in \eqref{eq:opt_eps_baseline} (after adding a concave regularizer), the solution pair of this problem is unique, which means the update of $\{( L_{\epsilon_k},\epsilon_{k})\}$ stops at step $K$. Together, this also means that policy update $\{\pi_k\}$ converges.

\subsection{Analysis on Performance Improvement in Safe Policy Iteration}\label{appendix:SPI_performance}
Similar to the analysis in \cite{achiam2017constrained}, the following lemma provides a bound in policy improvement.
\begin{lemma}\label{lem:perf_improve}
For any policies $\pi'$ and $\pi$, define the following error function:
\[
\small
\begin{split}
\Lambda (\pi,\pi')=&\mathbb E\left[\sum_{t=0}^{\mathrm{T}^*-1}\left(\frac{\pi'(a_t|x_t)}{\pi(a_t|x_t)}-1\right)\underbrace{\left(C(x_t,a_t)+V^\pi(x_{t+1})-V^\pi(x_t)\right)}_{\delta^\pi(x_t,a_t,x_{t+1})}\mid x_0,\pi\right]\\
=&\mathbb E\left[\sum_{t=0}^{\mathrm{T}^*-1}\mathbb E_{a\sim\pi'(\cdot|x)}[Q^\pi(x_t,a)]-V^\pi(x_t)\mid x_0,\pi\right]
\end{split}
\]
and $\Delta^{\pi'}=\max_{x\in\mathcal X'}|\mathbb E_{a\sim\pi'(\cdot|x)}[Q^\pi(x,a)]-V^\pi(x)|$. Then, the following error bound on the performance difference between $\pi$ and $\pi'$ holds:
\[
\Lambda (\pi,\pi')-\mathcal E^{\pi,\pi'}_{TV}\leq\mathcal C_{\pi'}(x_0)-\mathcal C_{\pi}(x_0)\leq \Lambda (\pi,\pi')+\mathcal E^{\pi,\pi'}_{TV}.
\]
where $\mathcal E^{\pi,\pi'}_{TV}=2\Delta^{\pi'}\cdot\left(\max_{x_0\in\mathcal X'}\mathbb E[\mathrm{T}^*|x_0,\pi']\right)\cdot\mathbb E\left[\sum_{t=0}^{\mathrm{T}^*-1}D_{TV}(\pi'||\pi)(x_t)\mid x_0,\pi\right]$.
\end{lemma}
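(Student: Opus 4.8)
The plan is to establish the sandwich bound in three stages: first verify the two displayed expressions for $\Lambda(\pi,\pi')$ coincide; then prove an exact \emph{performance difference} identity for $\mathcal C_{\pi'}(x_0)-\mathcal C_{\pi}(x_0)$ whose expectation runs along $\pi'$-trajectories; and finally control the discrepancy between that $\pi'$-expectation and $\Lambda$ (a $\pi$-expectation) by a total-variation term. Throughout I write $A^\pi(x,a):=Q^\pi(x,a)-V^\pi(x)$ for the advantage, with $V^\pi(x)=\mathcal C_\pi(x)$.

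For the first stage I would condition the summand $\left(\frac{\pi'(a_t|x_t)}{\pi(a_t|x_t)}-1\right)\delta^\pi(x_t,a_t,x_{t+1})$ on $x_t$ and integrate out $a_t\sim\pi(\cdot|x_t)$ and then $x_{t+1}\sim P(\cdot|x_t,a_t)$. Since $\mathbb E[\delta^\pi(x,a,x')\mid x,a]=c(x,a)+\sum_{x'}P(x'|x,a)V^\pi(x')-V^\pi(x)=A^\pi(x,a)$, the importance ratio turns the inner conditional expectation into $\mathbb E_{a\sim\pi'(\cdot|x)}[A^\pi(x,a)]-\mathbb E_{a\sim\pi(\cdot|x)}[A^\pi(x,a)]$. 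The second term vanishes because $\mathbb E_{a\sim\pi(\cdot|x)}[Q^\pi(x,a)]=V^\pi(x)$, leaving $\bar A(x):=\mathbb E_{a\sim\pi'(\cdot|x)}[Q^\pi(x,a)]-V^\pi(x)$, which is exactly the summand of the second form; note $|\bar A(x)|\le\Delta^{\pi'}$ by definition.

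For the second stage I would telescope: along any trajectory, $\sum_{t=0}^{\mathrm{T}^*-1}\delta^\pi(x_t,a_t,x_{t+1})=\sum_{t=0}^{\mathrm{T}^*-1}c(x_t,a_t)+V^\pi(x_{\mathrm{T}^*})-V^\pi(x_0)$, and since $x_{\mathrm{T}^*}$ is terminal we have $V^\pi(x_{\mathrm{T}^*})=0$ and $V^\pi(x_0)=\mathcal C_\pi(x_0)$. Taking expectations over $\pi'$-trajectories gives $\mathbb E[\sum_t\delta^\pi\mid x_0,\pi']=\mathcal C_{\pi'}(x_0)-\mathcal C_\pi(x_0)$; conditioning the summand on $x_t$ under $\pi'$ replaces $\delta^\pi$ by $\bar A(x_t)$, yielding the identity $\mathcal C_{\pi'}(x_0)-\mathcal C_\pi(x_0)=\mathbb E[\sum_{t=0}^{\mathrm{T}^*-1}\bar A(x_t)\mid x_0,\pi']$. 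Combined with stage one, the quantity to bound is $\mathcal C_{\pi'}(x_0)-\mathcal C_\pi(x_0)-\Lambda(\pi,\pi')=\sum_{x\in\mathcal X'}(\rho_{\pi'}(x)-\rho_\pi(x))\bar A(x)$, where $\rho_\pi(x)=\mathbb E[\sum_t\mathbf 1\{x_t=x\}\mid x_0,\pi]$ is the occupation measure, so the gap is at most $\Delta^{\pi'}\,\|\rho_{\pi'}-\rho_\pi\|_1$.

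The main obstacle is the last stage: bounding $\|\rho_{\pi'}-\rho_\pi\|_1$. Writing $\rho_\pi=\mathbf 1(x_0)^\top(I-P_\pi)^{-1}$ with $P_\pi$ the sub-stochastic transient kernel (and likewise for $\pi'$), I would apply the resolvent identity $(I-P_{\pi'})^{-1}-(I-P_\pi)^{-1}=(I-P_\pi)^{-1}(P_{\pi'}-P_\pi)(I-P_{\pi'})^{-1}$ to obtain $\rho_{\pi'}-\rho_\pi=\rho_\pi(P_{\pi'}-P_\pi)(I-P_{\pi'})^{-1}$. Setting $v:=\rho_\pi(P_{\pi'}-P_\pi)$ and using that each row of $P_{\pi'}-P_\pi$ has $\ell_1$-mass $\sum_{x'}|\sum_a(\pi'(a|x)-\pi(a|x))P(x'|x,a)|\le 2 D_{TV}(\pi'\|\pi)(x)$ gives $\|v\|_1\le 2\sum_x\rho_\pi(x)D_{TV}(\pi'\|\pi)(x)=2\,\mathbb E[\sum_t D_{TV}(\pi'\|\pi)(x_t)\mid x_0,\pi]$. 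Finally, the entries of $(I-P_{\pi'})^{-1}$ are nonnegative expected-visit counts whose row sums equal $\mathbb E[\mathrm{T}^*\mid x_0=x,\pi']$, so the induced inequality $\|v(I-P_{\pi'})^{-1}\|_1\le\|v\|_1\max_x\sum_{x'}[(I-P_{\pi'})^{-1}](x,x')=\|v\|_1\max_x\mathbb E[\mathrm{T}^*\mid x,\pi']$ closes the estimate, producing exactly $\|\rho_{\pi'}-\rho_\pi\|_1\le 2(\max_{x_0}\mathbb E[\mathrm{T}^*\mid x_0,\pi'])\,\mathbb E[\sum_t D_{TV}(\pi'\|\pi)(x_t)\mid x_0,\pi]$ and hence $\Delta^{\pi'}\|\rho_{\pi'}-\rho_\pi\|_1\le\mathcal E^{\pi,\pi'}_{TV}$. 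The delicate points are keeping the resolvent factorization on the correct side so the $D_{TV}$ weighting lands against $\rho_\pi$ (the measure appearing in $\mathcal E^{\pi,\pi'}_{TV}$) rather than $\rho_{\pi'}$, and invoking the transient-MDP assumption that forces the spectral radii of $P_\pi,P_{\pi'}$ below $1$ so the inverses exist and the manipulations are justified.
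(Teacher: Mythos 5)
Your proposal is correct and follows essentially the same route as the paper's proof: a telescoping/performance-difference identity to express $\mathcal C_{\pi'}(x_0)-\mathcal C_{\pi}(x_0)$, a H\"older-type bound isolating $\Delta^{\pi'}$ against the difference of occupation measures, and then the resolvent identity $(I-P_{\pi})^{-1}-(I-P_{\pi'})^{-1}=(I-P_{\pi})^{-1}(P_{\pi}-P_{\pi'})(I-P_{\pi'})^{-1}$ combined with the row-sum bound $\max_{x}\sum_{y}[(I-P_{\pi'})^{-1}](x,y)=\max_{x}\mathbb E[\mathrm{T}^*\mid x,\pi']$ and the per-state $\ell_1$ bound $2D_{TV}(\pi'\|\pi)(x)$ weighted by $\rho_\pi$. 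Your stage-two formulation via the exact identity $\mathcal C_{\pi'}(x_0)-\mathcal C_\pi(x_0)=\mathbb E[\sum_t\bar A(x_t)\mid x_0,\pi']$ is a cleaner statement of what the paper's telescoping display intends, but the decomposition, the key matrix identity, and all bounding steps coincide with the paper's argument.
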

\begin{proof}
First, it is clear from the property of telescopic sum that
\[
\small
\begin{split}
&\mathcal C_{\pi'}(x_0)-\mathcal C_{\pi}(x_0)=\mathbb E\left[\sum_{t=0}^{\mathrm{T}^*-1}\delta^{\pi'}(x_t,a_t,x_{t+1})\mid x_0,\pi'\right]-\mathbb E\left[\sum_{t=0}^{\mathrm{T}^*-1}\delta^\pi(x_t,a_t,x_{t+1})\mid x_0,\pi\right]\\
\leq&\mathbb E\left[\sum_{t=0}^{\mathrm{T}^*-1}\delta^{\pi'}(x_t,a_t,x_{t+1})-\delta^\pi(x_t,a_t,x_{t+1})\mid x_0,\pi\right]+\Delta^{\pi'}\cdot\sum_{y\in\mathcal X'}\left|\sum_{t=0}^{\mathrm{T}^*-1}\mathbb P(x_t=y|x_0,\pi')-\sum_{t=0}^{\mathrm{T}^*-1}\mathbb P(x_t=y|x_0,\pi)\right|,
\end{split}
\]
where the inequality is based on the Holder inequality $\mathbb E[|xy|]\leq \left(\mathbb E[|x|^p]\right)^{1/p}\left(\mathbb E[|y|^q] \right)^{1/q}$ with $p=1$ and $q=\infty$. 

Immediately, the first part of the above expression is re-written as: 
$\mathbb E\left[\sum_{t=0}^{\mathrm{T}^*-1}\delta^{\pi'}(x_t,a_t,x_{t+1})-\delta^\pi(x_t,a_t,x_{t+1})\mid x_0,\pi\right]=\Lambda(\pi,\pi')$. Recall that shorthand notation $P_\pi$ to denote the transition probability for 
$x\in\mathcal X'$ induced by the policy $\pi$. For the second part of the above expression, notice that the following chain of inequalities holds:
\[
\begin{split}
&\sum_{y\in\mathcal X'}\left|\sum_{t=0}^{\mathrm{T}^*-1}\mathbb P(x_t=y|x_0,\pi)-\sum_{t=0}^{\mathrm{T}^*-1}\mathbb P(x_t=y|x_0,\pi')\right|\\
=&\sum_{y\in\mathcal X'}\left|\mathbf 1(x_0)^\top\left((I-P_{\pi})^{-1} -(I-P_{\pi'})^{-1}\right)\mathbf 1(y)\right|\\
=&\sum_{y\in\mathcal X'}\left|\mathbf 1(x_0)^\top(I-P_{\pi})^{-1}(P_{\pi}-P_{\pi'})(I-P_{\pi'})^{-1}\mathbf 1(y)\right|\\
\leq&\|(\mathbf 1(x_0)^\top(I-P_{\pi})^{-1}(P_{\pi}-P_{\pi'})\|_1\cdot\sum_{y\in\mathcal X'}\max_{x_0\in\mathcal X'}\mathbf 1(x_0)(I-P_{\pi'})^{-1}\mathbf 1(y)\\
\leq&\sum_{y\in\mathcal X'}\overline{\mathrm{T}}\cdot\mathbf 1(y)\cdot\mathbb E\left[\sum_{t=0}^{\mathrm{T}^*-1}\sum_{x'\in\mathcal X'}P(x'|x_t,a_t)\sum_{a\in\mathcal A}\left|\pi(a_t|x_t)-\pi'(a_t|x_t)\right|\mid x_0,\pi\right]\\
=&2\overline{\mathrm{T}}\cdot\mathbb E\left[\sum_{t=0}^{\mathrm{T}^*-1}D_{TV}(\pi'||\pi)(x_t)\mid x_0,\pi\right],
\end{split}
\]
the first, is based on the Holder inequality with $p=1$ and $q=\infty$ and on the fact that all entries in $(I-P_{\pi'})^{-1}$ is non-negative, the second inequality is due to the fact that starting at any initial state $x_0$, it almost takes $\overline{\mathrm{T}}$ steps to the set of recurrent states $\mathcal X\setminus\mathcal X'$. In other words, one has the following inequality:
\[
\mathbf 1(x_0)(I-P_{\pi'})^{-1}\mathbf 1(y)=\mathbb E\left[\sum_{t=0}^{\mathrm{T}^*-1}\mathbf 1\{x_t=y\}\mid x_0,\pi'\right]\leq \overline{\mathrm{T}},\,\,\forall x_0,y\in\mathcal X'.
\]
Therefore, combining with these properties the proof of the above error bound is completed.
\end{proof}
Using this result, the sub-optimality performance bound of policy $\pi_{k^*}$ from SPI is $\overline{\mathrm{T}}C_{\max}-\big(\sum_{k=0}^{k^*-1}\max\{0, \Lambda^{\pi_k,\pi_{k+1}}-\mathcal E^{\pi_k,\pi_{k+1}}_{TV}\}+\mathcal C_{\pi_0}(x_0)\big)$. 

\subsection{Proof of Proposition \ref{prop:properties_safe_VI}}\label{appendix:properties_safe_VI}
For the derivations of consistent feasibility and monotonic improvement on value estimation, without loss of generality we only consider the case of $t=0$.

To show the property of consistent feasibility, notice that with the definitions of the initial $Q-$function $Q_0$, the initial Lyapunov function $L_{\epsilon_0}$ w.r.t. the initial auxiliary cost $\epsilon_0$, the corresponding induced policy $\pi_0$ is feasible to problem $\mathcal {OPT}$, i.e., $\mathcal D_{\pi_0}(x_0)\leq d_0$. Consider the optimization problem in Step 1. 
Immediately, since $\mathcal D_{\pi_0}$ satisfies the constraint in \eqref{eq:opt_eps_baseline}, one can treat it as a Lyapunov function, and the set of Lyapunov functions $\mathcal L_{\pi_0}(x_0,d_0)$ is non-empty. 
Therefore, there exists a bounded Lyapunov function $\{L_{\epsilon_1}(x)\}_{x\in\mathcal X}$ and auxiliary cost  $\{\epsilon_1(x)\}_{x\in\mathcal X}$ as the solution of this optimization problem. Now consider the policy update in Step 0. Since $\pi_1(\cdot|\cdot)$ belongs to the set of feasible policies $\mathcal F_{L_{\epsilon_1}}(\cdot)$, by recursively applying the inequality constraint on the updated policy $\pi_1$ for $\mathrm{T}^*-1$ times, one has the following inequality: 
\[
\mathbb E\left[\sum_{t=0}^{\mathrm{T}^*-1}d(x_t)+L_{\epsilon_1}(x_{\mathrm{T}^*})|x_0,\pi_1\right]\leq L_{\epsilon_1}(x_0)\leq d_0.
\]
This shows that $\pi_1$ is a feasible policy to problem $\mathcal{OPT}$.

To show the asymptotic convergence property, for every initial state $x_0$ and any time step $K$, with the policy $\pi=\{\pi_0,\ldots,\pi_{K-1}\}$ generated by the value iteration procedure, the cumulative cost can be broken down into the following two portions, which consists of the cost over the first $K$ stages and the remaining cost. Specifically,
\[
\overline V(x_0)=\mathbb E\left[\sum_{t=0}^{\mathrm{T}^*-1}c(x_t,a_t)\mid x_0,\pi\right]=\mathbb E\left[\sum_{t=0}^{K-1}c(x_t,a_t)\mid x_0,\pi\right]+\mathbb E\left[\sum_{t=K}^{\mathrm{T}^*-1}c(x_t,a_t)\mid x_0,\pi\right],
\]
where the second term is bounded $\mathbb E[\mathrm{T}^*-K|\pi,x_0]C_{\max}$, which is bounded by $\sum_{t=K}^\infty \mathbb P(x_t\in\mathcal X'\mid x_0,\pi)\cdot C_{\max}>0$. Since the value function $V_0(x)=\min_{\pi\in\mathcal F_{L_{\epsilon_0}}(x)}\pi(\cdot|x)^\top Q_0(x,\cdot)$ is also bounded, one can further show the following inequality:
\[
\begin{split}
-\mathbb P(x_K\in\mathcal X'\mid x_0,\pi)\cdot\|V_0\|_\infty\leq&  T_{K-1}[\cdots[T_0[V_0]]\cdots](x_0)-\mathbb E\left[\sum_{t=0}^{K-1}c(x_t,a_t)\mid x_0,\pi\right]\\
\leq& \mathbb P(x_K\in\mathcal X'\mid x_0,\pi)\cdot\|V_0\|_\infty.
\end{split}
\]
Recall from our problem setting that all policies are proper (see Assumption 3.1.1 and Assumption 3.1.2 in \cite{bertsekas1995dynamic}).  Then by the property of a transient MDP (see Definition 7.1 in \cite{altman1999constrained}), the sum of probabilities of the state trajectory after step $K$ that is in the transient set $\mathcal X'$, i.e., $\sum_{t=K}^\infty \mathbb P(x_t\in\mathcal X'\mid x_0,\pi)$, is bounded by $M_{\pi}\cdot\epsilon$.
Therefore, as $K$ goes to $\infty$, $\epsilon$ approaches $0$. Using the result that $\sum_{t=K}^\infty \mathbb P(x_t\in\mathcal X'\mid x_0,\pi)$ vanishes as $K$ goes to $\infty$, one concludes that
\[
\lim_{K\rightarrow\infty}T_{K-1}[\cdots[T_0[V_0]]\cdots](x_0)=\overline V(x_0),\,\,\forall x_0\in\mathcal X',
\]
which completes the proof of this property.

\newpage
\section{Pseudo-code of Safe Reinforcement Learning Algorithms}\label{appendix:pseudo_code_safe_RL}
\begin{algorithm}
\caption{Policy Distillation}
\label{alg:policy_distill}    
\begin{algorithmic}
\STATE {\bfseries Input:} 
Policy parameterization $\pi_\phi$ with parameter $\phi$; A batch of state trajectories $\{x_{0,m},\ldots,x_{\overline{\mathrm{T}}-1,m}\}_{m=1}^M$ generated by following the baseline policy $\pi_B$
\STATE Compute the action probabilities $\{\pi'(\cdot|x_{0,m}),\ldots,\pi'(\cdot|x_{\overline{\mathrm{T}}-1,m})\}_{m=1}^M$ by solving problem in \eqref{}.
\STATE Compute the policy parameter by supervised learning:
\[
\phi^*\in\arg\min_{\phi}\frac{1}{m}\sum_{m=1}^M\sum_{t=0}^{\overline{\mathrm{T}}-1} D_{\text{JSD}}(\pi_\phi(\cdot|x_{t,m})\parallel\pi'(\cdot|x_{t,m}))
\]
where $D_{\text{JSD}}(P|| Q)=\frac  {1}{2}D(P\parallel \frac {1}{2}(P+Q))+\frac  {1}{2}D(Q\parallel  \frac {1}{2}(P+Q))$ is the Jensen-Shannon divergence
\STATE {\bf Return} Distilled policy $\pi_{\phi^*}$
\end{algorithmic}
\end{algorithm}

\begin{algorithm}
\caption{Safe DQN}
\label{alg:safe_dqn}    
\begin{algorithmic}
\STATE {\bfseries Input:} 
Initial prioritized replay buffer $M = \{\emptyset\}$; Initial importance weights $w_0=1$, $w_{D,0}=1$ $w_{T,0}=1$; Mini-batch size $|B|$; Network parameters $\theta^-$, $\theta^-_D$, $\theta^-_T$; Initial feasible policy $\pi_0$;
\FOR{$k\in\{0,1,\ldots,\}$}
\FOR{$t=0$ {\bfseries to} $\overline{\mathrm{T}}-1$}
\STATE Sample action $a_t \sim \pi_k(\cdot|x_t)$, execute $a_t$ and observe next state $x_{t+1}$, and costs $(c_t,d_t)$ \STATE Add experiences to replay buffer, i.e., $M \leftarrow (x_t, a_t, c_t, d_t,x_{t+1}, w_0, w_{D,0}, w_{T,0}) \cup M$
\STATE From the buffer $M$, sample a mini-batch $B=\{(x_j , a_j , c_j , d_j, x'_{j}, w_j, w_{D,j}, w_{T,j})\}_{j=1}^{|B|}$ and set the targets $y_{D,j}$, $y_{T,j}$, and $y_{j}$
\STATE Update the $\theta$ parameters as follows: 
\[
\small
\begin{split}
&\theta_D \leftarrow\theta_D^--\kappa_j\cdot w_{D,j} \cdot (y_{D,j}-\widehat Q_{D}(x_j,a_j;\theta_D^-))\cdot\nabla_\theta \widehat Q_{D}(x_j,a_j;\theta)|_{\theta=\theta_D^-},\\ 
&\theta_T \leftarrow\theta_T^--\kappa_j\cdot w_{T,j}\cdot (y_{T,j}-\widehat Q_{T}(x_j,a_j;\theta_T^-))\cdot\nabla_\theta \widehat Q_{T}(x_j,a_j;\theta)|_{\theta=\theta_T^-},\\ 
&\theta \leftarrow\theta^--\eta_j\cdot w_j\cdot (y_j-\widehat Q(x_j,a_j;\theta^-))\cdot\nabla_\theta \widehat Q(x_j,a_j;\theta)|_{\theta=\theta^-}
\end{split}
\]
where the target values are respectively $y_{D,j} = d(x_j)+\pi_k(\cdot|x_j')^\top \widehat Q_{D}(x_j',\cdot;\theta_D)$, $y_{T,j} = \mathbf 1\{x_j\in\mathcal X'\}+\pi_k(\cdot|x_j')^\top \widehat Q_{T}(x_j',\cdot;\theta_T)$, and $y_j = c(x_j,a_j)+\pi'(\cdot|x_j')^\top  \widehat Q(x_j',\cdot;\theta)$, and $\pi'(\cdot|x'_j)$ is the greedy action probability w.r.t. $x'_j$, which is a solution to \eqref{eq:approx_greedy_pol}
\STATE \underline{Prioritized Sweep}: Update importance weights $w_j$, $w_{D,j}$, and $w_{T,j}$ of the samples in mini-batch $B$, based on TD errors $|y_j-\widehat Q(x_j,a_j;\theta)|$, $|y_{D,j}-\widehat Q_{D}(x_j,a_j;\theta_D)|$ and $|y_{T,j}-\widehat Q_{T}(x_j,a_j;\theta_T)|$
\STATE \underline{Distillation}: Update the policy to $\pi_{k+1}$ using Algorithm \ref{alg:policy_distill} w.r.t. data $\{x'_{0,j},\ldots,x'_{\overline{\mathrm{T}}-1,j}\}_{j=1}^{|B|}$ and $\{\pi'(\cdot|x'_{0,j}),\ldots,\pi'(\cdot|x'_{\overline{\mathrm{T}}-1,j)}\}_{j=1}^{|B|}$
\ENDFOR
\STATE \underline{Double $Q-$learning}: Update $\theta^- = \theta$, $\theta^-_D = \theta_D$ and $\theta^-_T = \theta_T$ after $c$ iterations
\ENDFOR
\STATE {\bf Return} 
\end{algorithmic}
\end{algorithm}

\begin{algorithm}
\caption{Safe DPI}
\label{alg:safe_cpi}    
\begin{algorithmic}
\STATE {\bfseries Input:} 
Initial prioritized replay buffer $M = \{\emptyset\}$; Initial importance weights $w_0=1$, $w_{D,0}=1$ $w_{T,0}=1$; Mini-batch size $|B|$;  Initial feasible policy $\pi_0$;
\FOR{$k\in\{0,1,\ldots,\}$}
\STATE Sample action $a_t \sim \pi_k(\cdot|x_t)$, execute $a_t$ and observe next state $x_{t+1}$, and costs $(c_t,d_t)$ \STATE Add experiences to replay buffer, i.e., $M \leftarrow (x_t, a_t, c_t, d_t,x_{t+1}, w_0, w_{D,0}, w_{T,0}) \cup M$
\STATE From the buffer $M$, sample a mini-batch $B=\{(x_j , a_j , c_j , d_j, x'_{j}, w_j, w_{D,j}, w_{T,j})\}_{j=1}^{|B|}$ and set the targets $y_{D,j}$, $y_{T,j}$, and $y_{j}$
\STATE Update the $\theta$ parameters as follows: 
\[
\small
\begin{split}
\theta^*_{\pi_k}\in&\arg\min_{\theta}\sum_{t,j}\big(c_{t,j}-\widehat Q(x_{t,j},a_{t,j};\theta) + \pi_k(\cdot|x_{t+1,j})^\top \widehat Q(x_{t+1,j},\cdot;\theta)\big)^2,\\
\theta^*_{D,\pi_k}\in&\arg\min_{\theta_D}\sum_{t,j}\big(d_{t,j}-\widehat Q_D(x_{t,j},a_{t,j};\theta_D) + \pi_k(\cdot|x_{t+1,j})^\top \widehat Q_D(x_{t+1,j},\cdot;\theta_D)\big)^2,\\
\theta^*_{T,\pi_k}\in&\arg\!\min_{\theta_T}\sum_{t,j}\big(\mathbf 1\{x_{t,j}\!\!\in\!\mathcal X'\}\!-\!Q_T(x_{t,j}\!,a_{t,j};\theta_T) + \pi_k(\cdot|x_{t+1,j})^\top Q_T(x_{t+1,j},\cdot;\theta_T)\big)^2
\end{split}
\]
and construct the $Q-$functions $\widehat Q(x, a;\theta^*_{\pi_k})$ and $\widehat Q_L(x, a;\theta^*_{D,\pi_k},\theta^*_{T,\pi_k})=\widehat Q_{D}(x,a;\theta_D)+\widetilde\epsilon'\cdot \widehat Q_{T}(x,a;\theta_T)$
\STATE Calculate greedy action probabilities $\{\pi'(\cdot|x_{0,j}),\ldots,\pi'(\cdot|x_{\overline{\mathrm{T}}-1,j)}\}_{j=1}^{|B|}$ by solving problem \eqref{eq:approx_greedy_pol}, with respect to batch of states drawn from the replay buffer $\{x_{0,j},\ldots,x_{\overline{\mathrm{T}}-1,j}\}_{j=1}^{|B|}$ 
\STATE \underline{Distillation}: Update the policy to $\pi_{k+1}$ using Algorithm \ref{alg:policy_distill} w.r.t. data $\{x_{0,j},\ldots,x_{\overline{\mathrm{T}}-1,j}\}_{j=1}^{|B|}$ and $\{\pi'(\cdot|x_{0,j}),\ldots,\pi'(\cdot|x_{\overline{\mathrm{T}}-1,j)}\}_{j=1}^{|B|}$
\ENDFOR
\STATE {\bf Return} Final policy $\pi_{k^*}$
\end{algorithmic}
\end{algorithm}

\newpage
\section{Practical Implementations}\label{appendix:practical}
There are several techniques that improve training and scalability of the safe RL algorithms.
To improve stability in training $Q$ networks, one may apply double $Q-$learning \citep{hasselt2010double} to separate the target values and the value function parameters and to slowly update the target $Q$ values at every predetermined iterations. 
On the other hand, to incentivize learning at state-action pairs that have high temporal difference (TD) errors, one can use a prioritized sweep in replay buffers \citep{schaul2015prioritized} to add an importance weight to relevant experience. To extend the safe RL algorithms to handle continuous actions, one may adopt the normalized advantage functions (NAFs) \citep{gu2016continuous} parameterization for $Q-$functions. Finally, instead of exactly solving the LP problem for policy optimization in \eqref{eq:approx_greedy_pol}, one may approximate this solution by solving its entropy regularized counterpart 
\citep{neu2017unified}. This approximation has an elegant closed-form solution that is parameterized by a Lagrange multiplier, which can be effectively computed by binary search methods (see Appendix \ref{appendix:inner_opt_discrete} and Appendix \ref{appendix:inner_opt_cont} for details).

\subsection{Case 1: Discrete Action Space}\label{appendix:inner_opt_discrete}
In this case, problem \eqref{eq:approx_greedy_pol} is cast as finite dimensional linear programming (LP). In order to effectively approximate the solution policy especially when the action space becomes large, instead of exactly solving this inner optimization problem, one considers its Shannon entropy-regularized variant:
\begin{align}
\min_{\pi\in\Delta}&\pi(\cdot|x)^\top  (Q(x,\cdot)+\tau \log\pi(\cdot|x))\label{eq:approx_greedy_pol_entropy}\\
\text{s.t. }&(\pi(\cdot|x)-\pi_B(\cdot|x))^\top Q_L(x, \cdot)\leq \widetilde\epsilon'(x)\nonumber
\end{align}
where $\tau>0$ is the regularization constant. When $\tau\rightarrow 0$, then $\pi^*_{\tau}$ converges to the original solution policy $\pi^*$.

We will hereby illustrate how to effectively solve $\pi^*_{\tau}$ for any given $\tau>0$ without explicitly solving the LP. Consider the Lagrangian problem for entropy-regularized optimization:
\[
\min_{\pi\in\Delta}\!\!\max_{\lambda\geq 0}\Gamma_x(\pi,\lambda),
\]
where 
$
\Gamma_x(\pi,\lambda)=\pi(\cdot|x)^\top \!(Q(x,\!\cdot)+\lambda Q_L(x,\!\cdot)+\tau \log\pi(\cdot|x)) - \lambda (\pi_B(\cdot|x)^\top Q_L(x, \cdot)+\widetilde\epsilon'(x))
$
is the Lagrangian function.
Notice that the set of stationary Markovian policies $\Delta$ is a convex set, and the objective function is a convex function in $\pi$ and concave in $\lambda$. By strong duality, there exists a saddle-point to the Lagrangian problem where solution policy is equal to $\pi^*_{\tau}$, and it can be solved by the maximin problem:
\[
\max_{\lambda\geq 0}\!\min_{\pi\in\Delta}\Gamma_x(\pi,\lambda).
\]
For the inner minimization problem, it has been shown that the $\lambda-$solution policy has the following closed form:
\[
\pi^*_{\tau,\lambda}(\cdot|x)\propto\exp\left(-\frac{Q(x,\!\cdot)+\lambda Q_L(x,\!\cdot)}{\tau}\right).
\]
Equipped with this formulation, we now solve the problem for the optimal Lagrange multiplier $\lambda^*(x)$ at state $x\in\mathcal X'$:
\[
\max_{\lambda\geq 0}-\tau\cdot\text{logsumexp}\left(-\frac{Q(x,\cdot)+\lambda Q_L(x,\cdot)}{\tau}\right)- \lambda (\pi_B(\cdot|x)^\top Q_L(x, \cdot)+\widetilde\epsilon'(x)),
\]
where $\text{logsumexp}(y)=\log\sum_a\exp(y_a)$ is a strictly convex function in $y$, and the objective function is a concave function of $\lambda$. Notice that this problem has a unique optimal Lagrange multiplier that is the solution of the following KKT condition:
\[
\pi_B(\cdot|x)^\top Q_L(x, \cdot)+\widetilde\epsilon'(x)+\frac{\sum_{a}Q_L(x,a)\exp\left(-\frac{Q(x,a)+\lambda Q_L(x,a)}{\tau}\right)}{\sum_{a}\exp\left(-\frac{Q(x,a)+\lambda Q_L(x,a)}{\tau}\right)}=0.
\]
Using the parameterization $z=\exp(-\lambda)$, this condition can be written as the following polynomial equation in $z$:
\begin{equation}\label{eq:lambda}
\sum_{a}\left(Q_L(x,a)+\pi_B(\cdot|x)^\top Q_L(x, \cdot)+\widetilde\epsilon'(x)\right)\cdot\exp\left(-\frac{Q(x,a)}{\tau}\right)\cdot z^{\frac{Q_L(x,a)}{\tau}}=0.
\end{equation}
Therefore, the solution $0\leq z^*(x)\leq 1$ can be solved by computing the root solution of the above polynomial and the optimal Lagrange multiplier is given by $\lambda^*(x)=-\log(z^*(x))\geq 0$.

Combining the above results, the optimal policy of the entropy-regularized problem is therefore given by
\begin{equation}\label{eq:pi_softmax}
\pi^*_{\tau}(\cdot|x)\propto\exp\left(-\frac{Q(x,\!\cdot)+\lambda^*(x) \cdot Q_L(x,\!\cdot)}{\tau}\right).
\end{equation}

\subsection{Case 2: Continuous Action Space}\label{appendix:inner_opt_cont}
In order to effectively solve the inner optimization problem in \eqref{eq:approx_greedy_pol} when the action space is continuous, 
on top of the using the entropy-regularized inner optimization problem in \eqref{eq:approx_greedy_pol_entropy}, we adopt the idea from the normalized advantage functions (NAF) approach for function approximation, where we express the $Q-$function and the state-action Lyapunov function with their second order Taylor-series expansions at an arbitrary action $\mu(x)$ as follows:
\[
\begin{split}
Q(x,a)\approx& Q(x,\mu(x))+\nabla_a Q(x,a)|_{a=\mu(x)} \cdot(a-\mu(x))\\
&+\frac{1}{2}
(a - \mu(x))^\top \nabla^2_a Q(x,a)|_{a=\mu(x)}(a -\mu(x))+o(\|a-\mu(x)\|^3),\\
Q_L(x,a)\approx& Q_L(x,\mu(x))+\nabla_a Q_L(x,a)|_{a=\mu(x)} \cdot(a-\mu(x))\\
&+\frac{1}{2}
(a - \mu(x))^\top \nabla^2_a Q_L(x,a)|_{a=\mu(x)}(a -\mu(x))+o(\|a-\mu(x)\|^3).
\end{split}
\]
While these representations are more restrictive than the general function approximations, they provide a receipe to determine
the policy, which is a minimizer of problem \eqref{eq:approx_greedy_pol} analytically for the updates in the safe AVI and safe DQN algorithms. In particular, using the above parameterizations, notice that
\[
\begin{split}
&\text{logsumexp}\left(-\frac{Q(x,\!\cdot)+\lambda^*(x) \cdot Q_L(x,\!\cdot)}{\tau}\right)\\
=&-\frac{1}{2}\log\left|A_{\lambda^*}(x)\right|-\frac{1}{2}\psi_{\lambda^*}(x)^\top A^{-1}_{\lambda^*}(x)\psi_{\lambda^*}(x)+K_{\lambda^*}(x)+\frac{n}{2}\log(2\pi),
\end{split}
\]
where $n$ is the dimension of actions,
\[
\begin{split}
A_{\lambda^*}(x)=&\frac{1}{\tau}\left(\nabla^2_a Q(x,a)|_{a=\mu(x)}+\lambda^*(x)\nabla^2_a Q_L(x,a)|_{a=\mu(x)}\right),\\\psi_{\lambda^*}(x)=&-\frac{1}{\tau}\left(\nabla_a Q(x,a)|_{a=\mu(x)}+\lambda^*(x)\nabla_a Q_L(x,a)|_{a=\mu(x)}\right)-A_{\lambda^*}(x)\mu(x),
\end{split}
\]
and 
\[
\begin{split}
K_{\lambda^*}(x)=&-\frac{Q(x,\mu(x))+\lambda^*(x)Q_L(x,\mu(x))}{\tau}\\
&+\left(\frac{1}{\tau}\left(\nabla_a Q(x,a)|_{a=\mu(x)}+\lambda^*(x)\nabla_a Q_L(x,a)|_{a=\mu(x)}\right)^\top-\frac{1}{2}\mu(x)^\top A_{\lambda^*}(x)\right)\mu(x)
\end{split}
\]
is a normalizing constant (that is independent of $a$). Therefore, according to the closed-form solution of the policy in \eqref{eq:pi_softmax},  
the optimal policy of problem \eqref{eq:approx_greedy_pol_entropy} follows a Gaussian  distribution, which is given by 
\[
\pi^*_{\tau}(\cdot|x)\sim\mathcal N(A_{\lambda^*}(x)^{-1}\psi_{\lambda^*}(x), A_{\lambda^*}(x)^{-1}).
\] 
In order to completely characterize the solution policy, it is still required to compute the Lagrange multiplier $\lambda^*(x)$, which is a polynomial root solution of \eqref{eq:lambda}. Since the action space is continuous, one can only approximate the integral (over actions) in this expression with numerical integration techniques, such as Gaussian quadrature, Simpson's method, or Trapezoidal rule etc. (Notice that if $\pi_B$ is a Gaussian policy, there is a tractable closed form expression for $\pi_B(\cdot|x)^\top Q_L(x, \cdot)$.)

\newpage
\section{Experimental Setup}\label{appendix:experiment_setup}
In the CMDP planning experiment, in order to demonstrate the numerical efficiency of the safe DP algorithms, we run a larger example that has a grid size of $60\times 60$.
To compute the LP policy optimization step, we use the open-source SciPy $\mathrm{linprog}$ solver. In terms of the computation time, on average every policy optimization iteration (over all states) in SPI and SVI takes approximately $25.0$ seconds, and for this problem SVI takes around $200$ iterations to converge, while SPI takes $60$ iterations. On the other hand the Dual LP method
computes an optimal solution, its computation time is over $9500$ seconds.

In the RL experiments, we use the Adam optimizer with learning rate $0.0001$.  
At each iteration, we collect an episode of experience (100 steps) and perform
10 training steps on batches of size 128 sampled uniformly from the replay buffer.
We update the target Q networks every 10 iterations and the baseline policy every 50 iterations.

For discrete observations, we use a feed-forward neural network with hidden layers of size 16, 64, 32, and relu activations.

For image observations, we use a convolutional neural network with filters of size $3\times3\times3\times32$, $32\times3\times3\times64$, and $64\times3\times3\times128$, with $2\times2$ max-pooling and relu activations after each.  We then pass the result through a 2-hidden layer network with sizes 512 and 128.

\begin{figure}[h]
\begin{center}
  \begin{tabular}{ccc}
    \includegraphics[width=0.25\columnwidth, angle =270]{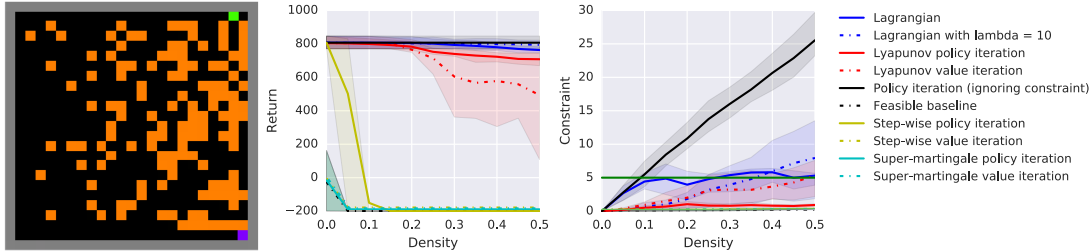} &
  \end{tabular}
\end{center}
\caption{
Results of various planning algorithms on the grid-world environment with obstacles (zoomed), with x-axis showing the obstacle density. From the leftmost column, the first figure illustrates the 2D planning domain example. The second and the third figure show the average return and the average cumulative constraint cost of the CMDP methods respectively. The fourth figure displays all the methods used in the experiment. The shaded regions indicate the $80\%$ confidence intervals. Clearly the safe DP algorithms compute policies that are safe and have good performance.
}
\label{fig:results1}
\end{figure}

\begin{figure}[h]
\begin{center}
  \begin{tabular}{ccc}
    & \small Discrete obs, $d_0=5$ & \small Discrete obs, $d_0=1$ \\
    \multirow{2}{*}{\rotatebox[origin=c]{90}{\tiny Constraints \hspace{2.1cm} Rewards\hspace{-1.7cm}}} &
    \includegraphics[width=0.2\columnwidth]{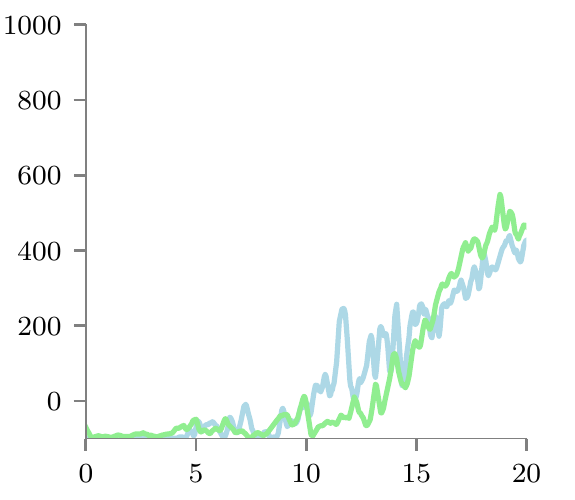} &
    \includegraphics[width=0.2\columnwidth]{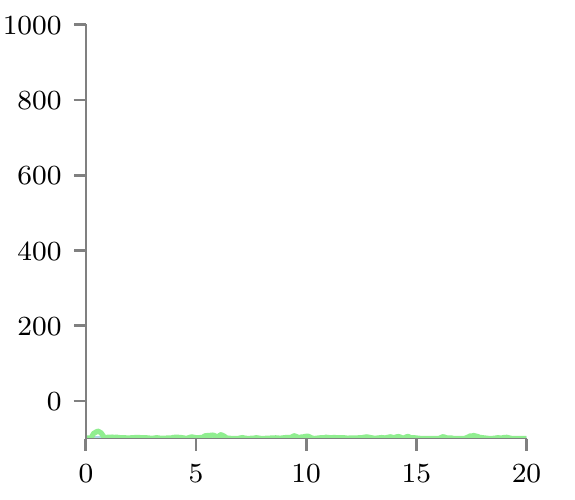} \\
    &
    \includegraphics[width=0.2\columnwidth]{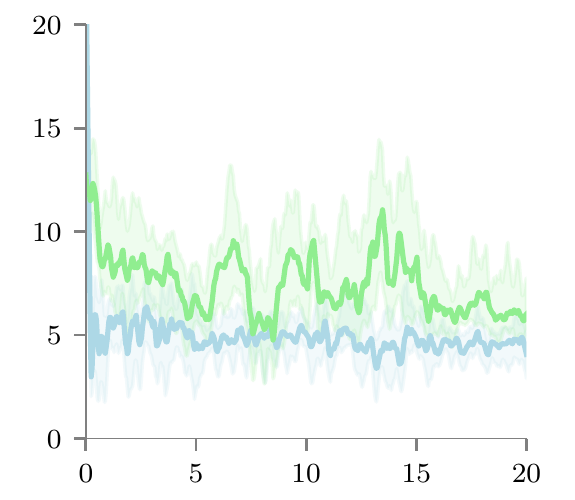} &
    \includegraphics[width=0.2\columnwidth]{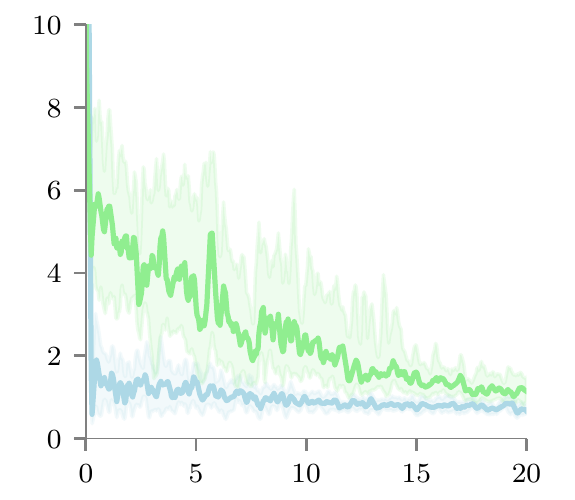} \\
    \multicolumn{3}{c}{\includegraphics[width=0.6\columnwidth]{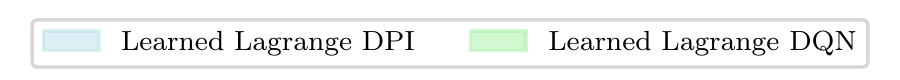}}
  \end{tabular}
\end{center}
\caption{
Results of using a saddle-point Lagrangian optimization for solving the grid-world environment with obstacles, with x-axis in thousands of episodes.  
}
\label{fig:results3}
\end{figure}
\end{document}